\providecommand{\algorithmname}{Algorithm}
\newtheorem{thm}{Theorem}[section]
\newtheorem{lemma}[thm]{Lemma} 
\newtheorem{prop}[thm]{Proposition} 
\newtheorem{cor}[thm]{Corollary}
\newtheorem{assumption}[thm]{Assumption}
\newtheorem{defn}[thm]{Definition} 
\newtheorem{rem}[thm]{Remark}
\algnewcommand\algorithmicinput{\textbf{INPUT:}}
\algnewcommand\INPUT{\item[\algorithmicinput]}
\algnewcommand\algorithmicoutput{\textbf{OUTPUT:}}
\algnewcommand\OUTPUT{\item[\algorithmicoutput]}
\algnewcommand\algorithmicparameter{\textbf{PARAMETER:}}
\algnewcommand\PARAMETER{\item[\algorithmicparameter]}
\algnewcommand\algorithmicPreprocessing{\textbf{PREPROCESSING:}}
\algnewcommand\PREPROCESSING{\item[\algorithmicPreprocessing]}
\DeclareMathOperator*{\plim}{plim}
\DeclareMathOperator*{\Bcal}{\mathcal{B}}
\DeclareMathOperator*{\obs}{\text{obs}}
\DeclareMathOperator*{\Fcal}{\mathcal{F}}
\DeclareMathOperator*{\Mcal}{\mathcal{M}}
\DeclareMathOperator*{\Pcal}{\mathcal{P}}
\DeclareMathOperator*{\Ocal}{\mathcal{O}}
\DeclareMathOperator*{\Gcal}{\mathcal{G}}
\DeclareMathOperator*{\Ical}{\mathcal{I}}
\DeclareMathOperator*{\Acal}{\mathcal{A}}
\DeclareMathOperator*{\PP}{\mathbb{P}}
\DeclareMathOperator*{\NN}{\mathbb{N}}
\DeclareMathOperator*{\EE}{\mathbb{E}}
\DeclareMathOperator{\RR}{\mathbb{R}}
\DeclareMathOperator*{\rank}{\textup{rank}}
\begin{document}

\thispagestyle{empty} {\vspace*{-2.8cm} \hspace*{7.5cm}
\includegraphics[width=75mm,viewport=0 0 235
85,clip]{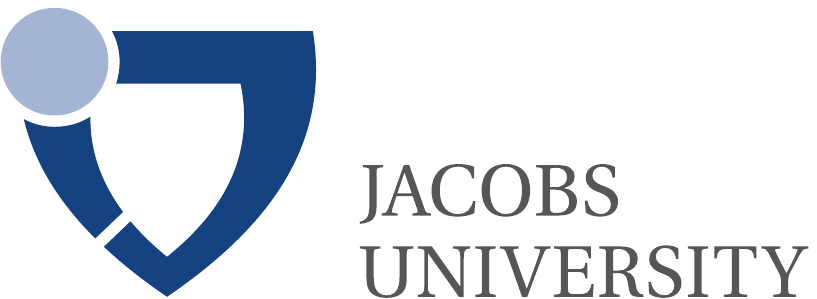}} \vspace{6.5cm}

{\noindent \large \textsf{Tianlin Liu}} \\ \\

{\noindent \Large \bf  A Consistent Method for Learning OOMs from Asymptotically Stationary Time Series Data Containing Missing Values} \\ \vfill
{\noindent \LARGE \textsf{Technical Report No. 38}} \\
{\noindent \textsf{\today}} \\ {\hspace*{-3.17cm}
\rule[3mm]{\textwidth}{0.75pt}} \\ {\LARGE \textsf{Department of Computer Science and Electrical Engineering}} \\

\newpage \thispagestyle{empty}

{ 
\noindent 
\Large {\bf A Consistent Method for Learning OOMs from Asymptotically Stationary Time Series Data Containing Missing Values} \\

\normalsize 

\vspace{1cm} 

\noindent 
{\bf Tianlin Liu}\\ \\ {\it Jacobs University Bremen\\ Department of Computer Science and Electrical Engineering\\ Campus Ring \\ 28759 Bremen\\ Germany\\ \\
E-Mail: t.liu@jacobs-university.de\\
}

\section*{Abstract}

In the traditional framework of spectral learning of stochastic time series models, model parameters are estimated based on trajectories of fully recorded observations. However, real-world time series data often contain missing values, and worse, the distributions of missingness events over time are often not independent of the visible process. Recently, a spectral OOM learning algorithm for time series with missing data was introduced and proved to be consistent, albeit under quite strong conditions. Here we refine the algorithm and prove that the original strong conditions can be very much relaxed. We validate our theoretical findings by numerical experiments, showing that the algorithm can consistently handle missingness patterns whose dynamic interacts with the visible process.

\newpage
\thispagestyle{empty}
\tableofcontents


\newpage
\setcounter{page}{1}

\section{Introduction}

Spectral methods have become widely used to model probabilistic grammars \citep{Bailly2010, Balle2012, Cohen2012, Cohen2013, Balle2015}, stochastic processes \citep{Hsu2012, Anandkumar2012, Jordan2013, Anandkumar2014, Thon2015, Wu2016}, and controlled dynamical systems \citep{Boots2011b, Hamilton2014, Hefny2015, Azizzadenesheli2016, Hefny2018}. Compared with likelihood-based methods such as Expectation Maximization (EM), spectral learning methods have two appealing properties: (i) they are consistent learning methods with convergence guarantees, and (ii) they are, in principle, non-iterative learning methods which are computationally inexpensive. These properties have made spectral learning a promising tool for learning and analyzing dynamical systems.

Stochastic time series modeling is one of the application areas for spectral methods. In the traditional framework of spectral learning, the training sequences of observations are assumed to contain no missing values. However, this assumption can be violated in real-world data. For instance, in longitudinal studies of disease treatment \citep{Hedeker2006}, the disease status of patients are often intermittently missing due to patients' skipped visits; in gene expression analysis, the gene data generated by microarray experiments often contain missing expression values \citep{Troyanskaya2001}; in affective computing for emotion measurement, the cognitive-affective states of subjects are usually sparsely annotated by human experts \citep{Grafsgaard2011}, where the un-annotated timestamps can be regarded as missing values.

There exist multiple methods to learn stochastic time series models from training data containing missing values. One obvious way is to assemble shorter trajectories that are free from missing values as the new training data. However, this approach might suffer from substantial information loss. Another way is to design algorithms that acknowledge the missing values in the training data. In this line of efforts, likelihood-based methods such as EM algorithms for Hidden Markov Models have been investigated in previous research \citep{Yeh2012, Yu2003}. Nonetheless, similar to other EM-based algorithms, these algorithms rely on local search heuristics, giving rise to locally optimal results and costly computation. 

\citet[Chapter 7]{Thon2017} recently presented a spectral learning algorithm that admits missing values in the training data. In this novel approach, one first estimates an Input-Output Observable Operator Model from missingness-observation sequences using a spectral method, and then reduces it to an Observable Operator Model, which describes the underlying stochastic process. Here we first give a condensed yet self-contained introduction to the algorithm first proposed in \citep[Chapter 7]{Thon2017}; we then analyze the theoretical properties of the algorithm by (i) presenting and analyzing a modified frequency estimator that acknowledges missing values in training data, and (ii) determining the consistency of the proposed spectral algorithm under  much more relaxed set of assumptions than in \citep[Chapter 7]{Thon2017}. We provide numerical experiments to demonstrate the capabilities of the proposed algorithm and our theoretical findings. 

\subsection{Notation}

Let $\Sigma_{\mathcal{A}}$ and $\Sigma_{\mathcal{O}}$ be alphabets for actions and observations of a system. We use the symbol $a \in \Sigma_{\mathcal{A}}$ to denote an action, and the symbol $o \in \Sigma_{\mathcal{O}}$ to denote an observation. We use a symbol with a bar to denote a word (a sequence of symbols), e.g., $\bar{o}$, and we use the lower index if the starting and ending time are specified for the word, e.g., $\bar{o}_{1:N} = o_1 \cdots o_N$. Let $\Sigma_{\Ocal}^\ast$ be the set of words over $\Sigma_{\Ocal}$, and let  $\Sigma_{\Ocal}^l$ be the set of words over $\Sigma_{\Ocal}$ with length $l$. The upper index with square brackets is reserved to count the trajectories of words, e.g., $\bar{x}_{1:N}^{[j]}$ forms the $j$-th trajectory. In the same fashion, a sequence of action-observation pairs is denoted by $\bar{a} \bar{o}$, and if the starting and ending time are specified, $\bar{a}_{1:N} \bar{o}_{1:N}$. Let $(\Sigma_{\Acal} \times \Sigma_{\Ocal})^\ast$ be the set of words over the set $\Sigma_{\Acal} \times \Sigma_{\Ocal}$, where $\times$ denotes the operation of cartesian product.


For a matrix $M$, we use $M^\top$ to denote the matrix transpose, $M^{-1}$ to denote the matrix inverse, $M^\dagger$ to denote the Moore-Penrose pseudo-inverse, and $[M]_{i,j}$ for the entry in the row indexed by $i$ and column indexed by $j$, $[M]_j$ for the column indexed by $j$, $[M^\top]_j^\top$ for the row indexed by $j$. Throughout this manuscript we use $\NN$ to denote the set of positive integers. We write $[n] =  \{1, 2, \cdots, n\}$ for some $n \in \NN$.  A matrix with all entries 0 will be denoted by $\mathbf{0}$. 

We denote the probability of an event by $\PP(\cdot)$ and the probabilistic condition by $\mid$. The probability limit of a sequence $s_n$, if such a limit exists, is denoted by $\plim s_n$. The convergence in probability is denoted by $\xrightarrow[]{p}$.

\section{Background} \label{sec:Background}
In this section, we review the basic definitions for stochastic processes and dynamical systems by heavily reusing \citep{Bauer1972, Schoenhuth2006, Schoenhuth2008} sometimes verbatim. We then take a brief overview of Observable Operator Models and Input-Output OOMs under the framework of Sequential Systems, which have been systematically introduced in \citep{Thon2015}. 

A discrete-time, finite-valued stochastic process is a quadruple $(\Omega, \Fcal, \PP, (X_t)_{t \in \NN})$, in which $(\Omega, \Fcal, \PP)$ is a probability space and $(X_t)_{t \in \NN}$ is a family of random variables on this probability space taking values in a measurable space $(S, \Pcal)$, where $S$ is a finite set and $\Pcal$ is the power set of $S$. We write $S^{\NN} = \prod_{i \in \NN} S_i$, where every factor $S_i$ is equal to $S$. That is, $S^{\NN}$ is the collection of $S$-valued right-infinite sequences. We let $\Gcal$ be the $\sigma$-algebra over $S^{\NN}$ generated by the cylinder sets of sequences in $S^{\NN}$. Further, we let $\otimes_{t \in \NN} X_t$ be the product random variable, which is a map from $\Omega$ to $S^{\NN}$. Let $\mu$ be the distribution of the random variable $\otimes_{t \in \NN} X_t$.

We define a \emph{left shift} transformation $T: S^{\NN} \to S^{\NN}$ by $T (s_1, s_2, s_3, \cdots ) = (s_2, s_3, \cdots )$ for all $(s_1, s_2,  s_3, \cdots ) \in S^{\NN}$. For a finite-length sequence $(s_1, s_2, \cdots, s_t) \in S^t$ for $t \in \NN$, we write $T(s_1, s_2, \cdots, s_t) = (s_2, \cdots, s_t) \in S^{t-1}$. For a set of sequences $B \in \Gcal$, we define $T(B) =  \{ T( s_1, s_2, s_3, \cdots) \mid (s_1, s_2, s_3, \cdots)  \in B \}$. The transformation $T^i$ for $i \in \NN$ is defined as the $i$-times composition of $T$, i.e., $T^i \coloneqq \underbrace{T \circ T \circ \dots \circ T}_{i\text{-times}}$. Similarly, for all $B \in \Gcal$ and $i \in \NN$, we define the $i$-times \emph{right shift} of a set of sequences $B$ by setting $T^{-i} (B) \coloneqq \{(s_1 \cdots s_i s_{i+1} s_{i+2} \cdots) \mid (s_{i+1} s_{i+2} \cdots) \in B, s_1 \cdots s_i \in S^i\}$. 

Given a stochastic process $(\Omega, \Fcal, \PP, (X_t)_{t \in \NN})$, the quadruple $(S^{\NN}, \Gcal, \mu, T)$ is called the induced \emph{canonical dynamical system}, which exists and is uniquely defined \citep[Definition 4.2]{Schoenhuth2006} \citep[Corollary 12.1.4]{Bauer1972}. In this manuscript, we only work with such canonical dynamical systems induced by stochastic processes, so when we talk about the properties of a dynamical system (e.g., stationarity), we also refer to these properties of the corresponding stochastic process.

A dynamical system is said to be {\em stationary} (relative to $T$), if $\mu(B) = \mu(T^{-1} B)$ for all $B \in \Gcal$; a dynamical system is called {\em asymptotically stationary} (relative to $T$), if there is a measure $\bar{m}$ such that $\lim_{i \to\infty} \mu(T^{-i}B)  = \bar{m}(B)$ for all $B \in \Gcal$, where the measure $\bar{m}$ is called the asymptotically stationary measure; a dynamical system is called {\em asymptotically mean stationary} (AMS) (relative to $T$), if there is a measure $\bar{\mu}$ such that $\lim_{n\to\infty}\frac{1}{n}\sum_{i=0}^{n-1} \mu(T^{-i}B)  = \bar{\mu}(B)$ for all $B \in \Gcal$, where the measure $\bar{\mu}$ is called the AMS measure. In this manuscript, we will be considering only asymptotically stationary dynamical systems, and in this case the asymptotically stationary measure is the same as AMS measure.

Given a dynamical system $(S^{\NN}, \Gcal, \mu, T)$, a function $g: S^{\NN} \to \RR$ is said to be a \emph{measurement} of the dynamical system if $g$ is $\Gcal$-$\Bcal(\RR)$ measurable, where $\Bcal(\RR)$ is the Borel $\sigma$-algebra of $\RR$. The dynamical system is said to be \emph{ergodic with respect to the measurement} $g$ if the sample average $\frac{1}{n} \sum_{i = 0}^{n-1} g(T^i \bar{s})$ converges as $n \to \infty$ for almost all $\bar{s} \in S^{\NN}$. An event $I\in \Gcal$ is called \emph{invariant} (relative to $T$), if $T^{-1}I = I$.  The set of invariant events is a sub-$\sigma$-algebra of $\Gcal$ which we will denote by $\Ical$.  A dynamical system is said to be \emph{ergodic} (relative to $T$), if $\mu(I)\in\{0,1\}$ for any such invariant event $I\in\Ical$.

\begin{prop} \textup{(Corollary 7.2.1. of \citep{Gray2009})} \label{prop:Ergodic}
If a dynamical system is ergodic, AMS with stationary measure $\bar{\mu}$, and the sequence $n^{-1} \sum_{i=0}^{n-1} g T^i$ is uniformly integrable with respect to $\bar{\mu}$, where $g$ is a measurement of the dynamical system, then the following limit is true $\mu$-a.e., $\bar{\mu}$-a.e., and in $L^1(\mu)$:

\[ \lim_{n \to \infty} \frac{1}{n} \sum_{i = 0}^{n-1} g T^i = \bar{\EE} (g), \]
where $\bar{\EE}(\cdot)$ denotes the expectation with respect to the AMS measure $\bar{\mu}$ and $L^1(\mu)$ is the space of all $\mu$-integrable functions.
\end{prop}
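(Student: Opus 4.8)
The statement is the classical Birkhoff pointwise ergodic theorem carried over from stationary systems to AMS systems, so the plan is to obtain the three convergence modes in turn, starting from the stationary mean $\bar\mu$. First I would note that setting $n=1$ in the uniform-integrability hypothesis forces $g \in L^1(\bar\mu)$, and that $(S^{\NN},\Gcal,\bar\mu,T)$ is itself a stationary dynamical system (the AMS mean of an AMS system is always stationary) which moreover inherits ergodicity: if $T^{-1}I=I$ then $T^{-i}I=I$ for all $i$, so $\mu(T^{-i}I)=\mu(I)$, and hence $\bar\mu(I)=\lim_n n^{-1}\sum_{i=0}^{n-1}\mu(T^{-i}I)=\mu(I)\in\{0,1\}$. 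Applying the classical Birkhoff theorem to $(S^{\NN},\Gcal,\bar\mu,T)$ then gives $n^{-1}\sum_{i=0}^{n-1}gT^i \to \EE_{\bar\mu}[g\mid\Ical]$ both $\bar\mu$-a.e.\ and in $L^1(\bar\mu)$, and since $\Ical$ is $\bar\mu$-trivial the limit is the constant $\bar\EE(g)$.

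Second I would transfer the almost-everywhere statement to $\mu$. The convergence set $C\coloneqq\{\bar s\in S^{\NN}: n^{-1}\sum_{i=0}^{n-1}g(T^i\bar s)\to\bar\EE(g)\}$ satisfies $T^{-1}C=C$, because deleting the first coordinate of a sequence does not change the value of a Cesàro limit; thus $C\in\Ical$. The computation in the previous step shows $\mu$ and $\bar\mu$ agree on every event of $\Ical$, so from $\bar\mu(C)=1$ we get $\mu(C)=1$, i.e.\ the averages converge to $\bar\EE(g)$ $\mu$-a.e.\ as well.

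Finally, for convergence in $L^1(\mu)$: since $\mu$ is a probability measure the $\mu$-a.e.\ convergence already yields convergence in $\mu$-measure, so by the Vitali convergence theorem it suffices to check that the averages $S_n g\coloneqq n^{-1}\sum_{i=0}^{n-1}gT^i$ are uniformly integrable with respect to $\mu$. Here I would use the hypothesised uniform integrability with respect to $\bar\mu$ together with the Cesàro convergence $n^{-1}\sum_{i=0}^{n-1}\mu T^{-i}\to\bar\mu$ (setwise): choose, via the de la Vallée-Poussin criterion, a convex increasing $\phi$ with $\phi(t)/t\to\infty$ and $\sup_n\int\phi(S_n g)\,d\bar\mu<\infty$, bound $\phi(S_n g)\le S_n\phi(g)$ by convexity so that $\int\phi(S_n g)\,d\mu\le\int\phi(g)\,d\big(n^{-1}\sum_{i=0}^{n-1}\mu T^{-i}\big)$, and let the right-hand side tend to $\int\phi(g)\,d\bar\mu<\infty$ (with a truncation/monotone-convergence step to cover the fact that $\phi(g)$ need only be integrable). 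I expect this last transfer of uniform integrability from $\bar\mu$ to $\mu$ to be the main obstacle, since the available inequalities are naturally phrased through the shifted measures $\mu T^{-i}$ and the passage to the limit must be controlled; the two almost-everywhere statements, by contrast, are routine once one notices that the convergence set is invariant. The full argument is carried out in \citep[Chapter 7]{Gray2009}.
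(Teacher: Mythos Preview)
The paper does not supply its own proof of this proposition: it is stated as Corollary~7.2.1 of \citep{Gray2009} and used as a black box. There is therefore nothing in the paper to compare your argument against.

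Your sketch is the standard route taken in Gray's text: Birkhoff on the stationary system $(S^{\NN},\Gcal,\bar\mu,T)$, transfer of the almost-everywhere statement to $\mu$ via invariance of the convergence set and the identity $\mu|_{\Ical}=\bar\mu|_{\Ical}$, and then Vitali for $L^1(\mu)$. The first two steps are clean. For the third, note that your de~la~Vall\'ee--Poussin argument as written does not quite close: from $\phi(|S_n g|)\le S_n\phi(|g|)$ you obtain $\int\phi(|S_n g|)\,d\mu\le\int\phi(|g|)\,d\mu_n$ with $\mu_n=n^{-1}\sum_{i=0}^{n-1}\mu T^{-i}$, but you only know $\phi(|g|)\in L^1(\bar\mu)$, and setwise convergence $\mu_n\to\bar\mu$ does not by itself bound $\sup_n\int\phi(|g|)\,d\mu_n$ when $\phi(|g|)$ is unbounded (already the $n=1$ term $\int\phi(|g|)\,d\mu$ could be infinite). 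You rightly flag this as the main obstacle and defer to \citep[Chapter~7]{Gray2009}; Gray handles the $L^1(\mu)$ step by a different uniform-integrability transfer that avoids this issue, so your deferral is appropriate even if the specific mechanism you sketch would need repair.
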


\subsection{Sequential Systems, OOMs, and IO-OOMs}

We now define the Sequential Systems, which are abstract linear algebraic models originally proposed to study Stochastic Finite Automata \citep{Carlyle1971}. 

 \begin{defn} \textup{(Sequential System)}.
 A $d$-dimensional linear \emph{Sequential System} (SS) over the alphabet $\Sigma$ is a structure $\mathcal{M} = ( \sigma, \{\tau_z \}_{z \in \Sigma}, \omega_{\epsilon} )$, where
$\sigma$ is a linear evaluation function $\mathbb{R}^d \to \mathbb{R}$, each $\tau_z \in \mathbb{R}^{d \times d}$ is a linear operator, and $\omega_{\epsilon} \in \mathbb{R}^d$ is the initial state.

For a SS $\mathcal{M}$, its \emph{external function} $ f_{\mathcal{M}}$ is defined by
 \[ f_{\mathcal{M}}: \Sigma^{\ast} \to \mathbb{R}:  \quad  f_{\mathcal{M}}(x_1 \cdots x_n ) \coloneqq \sigma \tau_{x_n} \cdots \tau_{x_1} \omega_{\epsilon} \]
where $x_1 \cdots x_n \in \Sigma^{\ast}$.  
 \end{defn}
 
 We regard two SSs as equivalent if they describe the same external function $f$.
 
 \begin{defn} \textup{(Equivalent SSs)} \label{defn:equivalentSSs}
 Two SSs $\Mcal$ and $\Mcal^\prime$ are equivalent, denoted by $\Mcal \simeq \Mcal^\prime$, if they define the same external function, i.e., if $f_{\Mcal} = f_{\Mcal^\prime}$.
 \end{defn}
 
 Based on the above definition of equivalence, it is clear that two SSs are equivalent if they are subject to a similarity transformation. 
 
\begin{lemma} \textup{(\citep[Lemma 10]{Thon2015})} \label{lemma:similarSS}
 Let $\Mcal =  ( \sigma, \{\tau_z \}_{z \in \Sigma}, \omega_{\epsilon} )$ be a $d$-dimensional SS, and $\rho \in \RR^{d \times d}$ be non-singular. Then $\Mcal  \simeq \Mcal^\prime$, where
$\Mcal^\prime = (\sigma \rho^{-1}, \{ \rho \tau_z \rho^{-1}\}_{z \in \Sigma}, \rho \omega_{\epsilon} )$. 
 \end{lemma}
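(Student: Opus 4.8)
The plan is to verify Definition~\ref{defn:equivalentSSs} directly: I will show that the external functions of $\Mcal$ and $\Mcal^\prime$ agree on every word, i.e.\ $f_{\Mcal} = f_{\Mcal^\prime}$, which is exactly the assertion $\Mcal \simeq \Mcal^\prime$. First note that since $\rho$ is non-singular, $\rho^{-1}$ exists, so $\Mcal^\prime = (\sigma\rho^{-1}, \{\rho\tau_z\rho^{-1}\}_{z\in\Sigma}, \rho\omega_\epsilon)$ is indeed a well-defined $d$-dimensional SS and its external function is defined on all of $\Sigma^\ast$.

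Next I would fix an arbitrary word $x_1\cdots x_n \in \Sigma^\ast$ and expand $f_{\Mcal^\prime}(x_1\cdots x_n)$ using the definition of the external function, paying attention to the convention that the operators are composed in the reverse order of the symbols:
\[ f_{\Mcal^\prime}(x_1\cdots x_n) = (\sigma\rho^{-1})\,(\rho\tau_{x_n}\rho^{-1})\,(\rho\tau_{x_{n-1}}\rho^{-1})\cdots(\rho\tau_{x_1}\rho^{-1})\,(\rho\omega_\epsilon). \]
Every interior pair $\rho^{-1}\rho$ equals the identity, so the right-hand side telescopes to $\sigma\tau_{x_n}\tau_{x_{n-1}}\cdots\tau_{x_1}\omega_\epsilon = f_{\Mcal}(x_1\cdots x_n)$. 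For full rigour the telescoping can be written as a short induction on $n$: the base case $n=0$ (the empty word) gives $f_{\Mcal^\prime}(\epsilon) = \sigma\rho^{-1}\rho\omega_\epsilon = \sigma\omega_\epsilon = f_{\Mcal}(\epsilon)$, and the inductive step peels off the outermost $\sigma\rho^{-1}$ together with one operator $\rho\tau_{x_n}\rho^{-1}$, using the induction hypothesis on the remaining product. Since the word was arbitrary, $f_{\Mcal} = f_{\Mcal^\prime}$ and hence $\Mcal \simeq \Mcal^\prime$.

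I do not expect a genuine obstacle: the statement is essentially bookkeeping, and is the SS analogue of the familiar fact that conjugating the observable operators of an OOM (and transforming the initial state and evaluation functional accordingly) leaves the induced function on strings invariant. The only points needing a moment's care are (i) lining up the conjugating factors correctly given the reversed order in the definition of $f_{\Mcal}$, so that the cancellations actually occur, and (ii) treating the empty word as well, so that equality of external functions is established on the whole of $\Sigma^\ast$.
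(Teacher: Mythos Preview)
Your proposal is correct: the telescoping computation (with the empty-word base case) directly establishes $f_{\Mcal}=f_{\Mcal^\prime}$, and this is exactly the standard argument. The paper itself does not give a proof of this lemma but simply cites it from \citep[Lemma 10]{Thon2015}, so there is nothing further to compare.
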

 
 A SS could be further specified as a Stochastic Multiplicity Automaton (SMA), an Observable Operator Model (OOM), or an Input-Output OOM (IO-OOM), depending on whether one is interested in modeling probabilistic languages, stochastic processes, or controlled processes. We proceed to define OOMs.

 \begin{defn} \textup{(OOM)}.
 An \emph{uncontrolled process} over the alphabet $\Sigma_{\Ocal}$ is a function $f : \Sigma_{\Ocal}^\ast \to [0,1]$ that satisfies (i) $f(\epsilon) = 1$ and (ii) for all $\bar{x} \in \Sigma_{\Ocal}^\ast: f(\bar{x}) = \sum_{o \in \Sigma_{\Ocal}} f(\bar{x} o)$. An \emph{Observable Operator Model} (OOM) is a SS that models an uncontrolled process.  
 
 \end{defn}

We see that OOMs are defined by letting external functions of SSs to be uncontrolled processes. Similarly, we could define Input-Output OOMs, or equivalently\footnote{ IO-OOMs and PSRs have different formalisms, but using the formulation in Definition \ref{defn:IO-OOM}, IO-OOMs are equivalent to PSRs \citep{Thon2015}. We use IO-OOMs in this report instead of PSRs only for the consistency in notations. Note the original definition of IO-OOMs of \citep{Jaeger1998} is by now deprecated, with which IO-OOMs were more restrictive than PSRs \citep{Singh2004}.}, Predictive State Representations (PSRs), by setting the outer functions of SSs to be controlled processes.

\begin{defn} \label{defn:IO-OOM} \textup{(IO-OOM)} A \emph{controlled process} over the alphabet $\Sigma_{\Acal} \times \Sigma_{\Ocal}$ is a function $f: (\Sigma_{\Acal} \times \Sigma_{\Ocal})^\ast \to [0,1]$ that satisfies (i) $f(\epsilon) = 1 $ and (ii) $\forall \bar{x} \in (\Sigma_{\Acal} \times \Sigma_{\Ocal})^\ast, a \in \Sigma_{\Acal}: f(\bar{x})= \sum_{o \in \Sigma_{\Ocal}}(\bar{x}ao)$. An \emph{Input-Output OOM} (IO-OOM) is a SS that models a controlled process.
\end{defn}

In general, OOMs and IO-OOMs are models with \emph{predictive states}, meaning that their states encode the necessary information for predicting the future. For this reason, conceptually OOMs and IO-OOMs are very different from models with \emph{latent states} such as HMMs \citep{Bengio1999} and POMDPs \citep{Kaelbling1998}, where states are defined by probability distributions over hidden variables. It has been shown that OOMs and IO-OOMs have greater representational capacity: OOMs extend HMMs \citep{Jaeger2000} and IO-OOMs extend POMDPs \citep{Littman2001}.

\section{Spectral Learning for OOMs from Data Containing Missing Values} \label{section:SpectralMissingValues}

The standard spectral learning algorithms for time series models require that the training sequences be fully recorded. This requirement, however, can be violated in real-world sequential data where missing values are not uncommon. In this section, based on \citep{Thon2017}, we present and analyze a spectral learning algorithm that acknowledges the missing values in the training data, and use such data to learn OOMs which describe the underlying stochastic processes.

\subsection{The Types of Missingness in Time Series Data} \label{subsection:types}

In this subsection, we review the basic definitions for spectral learning with missing values as introduced in \citep{Thon2017}, sometimes using his wording. Consider a stochastic process $(X_t)_{t \in \mathbb{N}}$ that takes values in $\Sigma_{\Ocal}$. We will call this stochastic process the \emph{underlying stochastic process}. Throughout this manuscript we will only deal with underlying stochastic processes that are asymptotically stationary (and therefore AMS). Let $\bar{x}_{1:N} = x_1 \cdots x_N$ be an initial sample from the underlying stochastic process $(X_t)_{t \in \mathbb{N}}$. In practice, for some time steps $t$ we do not observe the value $x_t$, and in this situation we say that $x_t$ is missing. We let $\bar{m}_{1:N} = m_1 \cdots m_N \in \{0, 1\}^N$ be a sequence of \emph{missingness}, with $m_t = 1$ if the value $x_t$ is missing, else $m_t = 0$. Let $\bar{o}_{1:N} \in (\Sigma_{\Ocal} \cup \{\emptyset\} )^N$ denote the sequence of observations of length $N$, where $o_t = x_t$ if $m_t = 0$, i.e., if the observation at time $t$ is not missing, and $o_t = \emptyset$ otherwise. We can pair up $m_t o_t$ for all $t \in [N]$ as a \emph{missingness-observation} sequence, such that $\bar{m}_{1:N} \bar{o}_{1:N} = m_1 o_1 \cdots m_N o_N$ is the initial sample of a missingness-observation process $(M_t O_t)_{t \in \mathbb{N}}$. We use an example to illustrate these notations: For $\Sigma_{\Ocal} = \{\texttt{a,b,c}\}$ and $N = 4$, let $\bar{x}_{1:N} = \texttt{abcc}$ be the underlying sequence, and suppose the first symbol \texttt{a} is missing, then $\bar{m}_{1:N} = \texttt{1000}$, $\bar{o}_{1:N} = \emptyset \texttt{bcc}$, $\bar{m}_{1:N} \bar{o}_{1:N} = \texttt{1}\emptyset\texttt{0b0c0c}$.

For an underlying stochastic process $(X_t)_{t \in \mathbb{N}}$, our goal is to learn a model for $(X_t)_{t \in \mathbb{N}}$ using $\bar{m}_{1:N}\bar{o}_{1:N}$ as training data. To achieve such a goal, we will treat missing values as wildcards or ``don't care'' placeholders for observations (the purpose of which will be clear later).  To have a convenient notation for describing the effects of wildcards, for all $t \in \NN$, we invest an additional random variable $X^{\obs}_{t}: \Omega \to \Sigma_{\Ocal}$, such that $X_t^{\obs}(\omega) \coloneqq X_{t}(\omega)$ for all $\omega \in \Omega$ and for all $t \in \NN$. Additionally, for all $t \in \NN$, we introduce a missing value notation $\emptyset$ upon $X_t^{\obs}$: by writing $X_t^{\obs} = \emptyset$, we simply mean $X_t^{\obs} \in \Sigma_{\Ocal}$, reflecting the wildcard or ``don't care'' placeholder nature of $\emptyset$. This simply means that each $X_t^{\obs}$ is an identical copy of $X_t$ for all $t \in \NN$, with a special missing value notation equipped on the former but not on the latter. 

Given a missingness-observation sequence $\bar{m}_{1:N} \bar{o}_{1:N}$, the joint probability of this missingness-observation sequence is governed by the \emph{missingness  process} $\pi$ and the \emph{observation process} $f$ in the sense that
\begin{multline}\label{eq:combinedProcess}
 \PP \left (M_{1:N } O_{1:N} = \bar{m}_{1:N} \bar{o}_{1:N} \right ) =   \underbrace{\prod_{t = 1}^N \PP \left (  M_t = m_t \mid M_{1:t-1} O_{1:t-1} = \bar{m}_{1:t-1} \bar{o}_{1:t-1} \right )}_{\pi(M_{1:N} O_{1:N} = \bar{m}_{1:N} \bar{o}_{1:N} )} \cdot \\ \underbrace{\prod_{t = 1}^N \PP \left ( O_t = o_t \mid M_{1:t-1} O_{1:t-1} = \bar{m}_{1:t-1} \bar{o}_{1:t-1}, M_t = m_t \right )}_{f(M_{1:N} O_{1:N}= \bar{m}_{1:N} \bar{o}_{1:N} )}.
\end{multline}

If the random variables $M_{1:N } O_{1:N}$ are clear from context, we will simply drop them and write $\PP \left ( \bar{m}_{1:N} \bar{o}_{1:N} \right )$, $f(\bar{m}_{1:N} \bar{o}_{1:N} )$, and $\pi(\bar{m}_{1:N} \bar{o}_{1:N} )$. Note that if $\PP \left ( \bar{m}_{1:N} \bar{o}_{1:N} \right ) = 0$, the factorization at the right hand side of Equation \ref{eq:combinedProcess} would not be defined. For this reason, we assume $\PP \left ( \bar{m}_{1:N} \bar{o}_{1:N} \right ) > 0$ when using the factorization in Equation \ref{eq:combinedProcess}.

We now specify $f(\bar{m}_{1:N} \bar{o}_{1:N} )$ and $\pi(\bar{m}_{1:N} \bar{o}_{1:N} )$ in more detail. Although the random variables $M_t O_t$ for all $t \in \NN$ take values in $\{0, 1\} \times (\Sigma_{\Ocal} \cup \{\emptyset\})$, we are not interested in the pairs $(0, \emptyset)$ and $(1, x)$ for $x \in \Sigma_{\Ocal}$, as they do not make any sense for the underlying stochastic process. For this reason, we impose a restriction on the observation process $f(\cdot)$ by requiring that 
\begin{equation} \label{eq:observationProcess}
\PP ( O_{t} = \emptyset \mid M_{1:t-1} O_{1:t-1} = \bar{m}_{1:t-1} \bar{o}_{1:t-1}, M_t = m_t ) =
\begin{cases}
1& \text{if~} m_t = 1, \\
0& \text{if~} m_t = 0,
\end{cases}
\end{equation}
for all $t$.

We now specify a special case of missingness $\pi (\cdot)$ named AMSAR.

\begin{defn} \textup{(AMSAR missingness).} The values in the stochastic process are said to be \emph{always missing sequentially at random} (AMSAR) if for all $t \in [N]$ and for all $N$ we have:
\[ \PP(M_t = m_t \mid X^{\obs}_{1:t-1} = \bar{o}_{1:t-1}, X_{t:N} = \bar{x}_{t:N}) = \PP(M_t = m_t \mid X^{\obs}_{1:t-1} = \bar{o}_{1:t-1}), \]
where the index $t$ is the current time, $\bar{o}_{1:t-1} \in (\Sigma_{\Ocal} \cup \{\emptyset \})^{t-1}$ is the observation sequence (containing the missing values) prior to the current time $t$, and $\bar{x}_{t:N} \in \Sigma_{\Ocal}^{N-t+1}$ are the values of the underlying stochastic process from the current time $t$ to the future time $N$.
\end{defn}

 Intuitively, AMSAR says that missingness at every time $t$ is conditionally independent of the current and future values of the underlying stochastic process, given the previously observed values. It also says that missingness at time $t$ is independent of which ``true but unobserved'' outputs $o$ have been emitted at times when there was missingness. We consider that this is a realistic assumption for the missing values in real-time sequential data in the sense that the missingness at a time \emph{can} depend on the previous observations.

\begin{lemma}  \label{lemma:AMSAR} \textup{(Lemma 70 of \citep{Thon2017})}
Supposing the missingness is AMSAR, for any $t, \bar{m}, \bar{x}$, we have 
\begin{equation} \label{eq:lemmaAMSAR}
 \PP ( X_{t} = x_t \mid  X^{\obs}_{1:t-1} = \bar{o}_{1:t-1}, M_t = m_t ) = \PP \left ( X_{t} = x_{t} \mid X^{\obs}_{1:t-1} = \bar{o}_{1:t-1} \right ).
\end{equation}
\end{lemma}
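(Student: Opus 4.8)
The plan is to recognize that the AMSAR property is a special case of the conditional independence statement $X_t \perp M_t \mid X^{\obs}_{1:t-1}$, and that \eqref{eq:lemmaAMSAR} is merely the symmetric rephrasing of it. So first I would fix $t$, a missingness pattern $\bar m$, and an underlying trajectory $\bar x$; these determine $\bar o_{1:t-1}$, $m_t$, and $x_t$, and I abbreviate $E \coloneqq \{X^{\obs}_{1:t-1} = \bar o_{1:t-1}\}$ (the coordinates of $\bar o_{1:t-1}$ equal to $\emptyset$ imposing no constraint). Before computing, I would dispose of the degenerate cases: if $\PP(E, M_t = m_t) = 0$ the left-hand side of \eqref{eq:lemmaAMSAR} is not defined and there is nothing to prove; and if $\PP(E) > 0$ but $\PP(X_t = x_t \mid E) = 0$, then $\PP(X_t = x_t, E, M_t = m_t) \le \PP(X_t = x_t, E) = 0$, so both sides of \eqref{eq:lemmaAMSAR} equal $0$. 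Hence I may assume $\PP(E, M_t = m_t) > 0$ and $\PP(X_t = x_t \mid E) > 0$, which makes every conditional probability below legitimate.

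Next I would expand the left-hand side by Bayes' rule, working throughout conditionally on $E$:
\[
\PP(X_t = x_t \mid E, M_t = m_t) = \frac{\PP(M_t = m_t \mid E, X_t = x_t)\,\PP(X_t = x_t \mid E)}{\PP(M_t = m_t \mid E)}.
\]
Then I would invoke the AMSAR definition in the special case $N = t$ — permitted since $t \in [t]$ — for which the conditioning block $X_{t:N} = \bar x_{t:N}$ reduces to $X_t = x_t$; this yields
\[
\PP(M_t = m_t \mid E, X_t = x_t) = \PP(M_t = m_t \mid E).
\]
Substituting into the numerator above cancels the denominator, and what survives is exactly $\PP(X_t = x_t \mid E) = \PP(X_t = x_t \mid X^{\obs}_{1:t-1} = \bar o_{1:t-1})$, which is the right-hand side of \eqref{eq:lemmaAMSAR}.

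I do not anticipate a genuine obstacle: the content is the symmetry of conditional independence, and the computation is two lines. The only things that need care are the bookkeeping with positive-probability conditioning events (so that each written conditional probability is defined, with the degenerate cases handled separately as above) and the observation that $X^{\obs}_t$ is, by construction, an identical copy of $X_t$, so that the event $\{X_t = x_t\}$ is the same event whether it appears in the Bayes expansion, in the AMSAR hypothesis, or in the statement of the lemma.
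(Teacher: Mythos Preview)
Your proposal is correct and follows essentially the same route as the paper: apply Bayes' rule to swap $X_t$ and $M_t$ in the conditioning, then use the AMSAR identity (with $N=t$) to cancel the resulting factor $\PP(M_t=m_t\mid E,X_t=x_t)$ against $\PP(M_t=m_t\mid E)$. Your handling of the degenerate positive-probability cases is slightly more explicit than the paper's, but the argument is the same.
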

\begin{proof} 
We assume $\PP ( X_{t} = x_t,  X^{\obs}_{1:t-1} = \bar{o}_{1:t-1}, M_t = m_t ) > 0$ as otherwise the statement is trivial.
\begin{eqnarray*}
& & \PP \left ( X_{t} = x_t \mid X^{\obs}_{1:t-1} = \bar{o}_{1:t-1}, M_t = m_{t}  \right ) \\
 & = & \PP (M_t = m_{t} \mid X_t = x_t, X^{\obs}_{1:t-1} = \bar{o}_{1:t-1}) \frac{\PP (X_t = x_t,  X^{\obs}_{1:t-1} = \bar{o}_{1:t-1})}{\PP (M_t = m_{t},  X^{\obs}_{1:t-1} = \bar{o}_{1:t-1})} \\
 & = & \PP (M_t = m_{t} \mid X_t = x_t, X^{\obs}_{1:t-1} = \bar{o}_{1:t-1}) \frac{\PP (X_t = x_t \mid  X^{\obs}_{1:t-1} = \bar{o}_{1:t-1})}{\PP (M_t = m_{t} \mid  X^{\obs}_{1:t-1} = \bar{o}_{1:t-1})} \\
& \stackrel{(\ast)}{=}  &  \PP (X_t = x_t \mid X^{\obs}_{1:t-1} = \bar{o}_{1:t-1}) 
\end{eqnarray*}
where $(\ast)$ follows from the AMSAR assumption:
\[ \PP (M_t = m_{t} \mid X_t = x_t,  X^{\obs}_{1:t-1} = \bar{o}_{1:t-1}) = \PP (M_t = m_{t} \mid  X^{\obs}_{1:t-1} = \bar{o}_{1:t-1}).\]

\end{proof}

\begin{prop} \label{prop:probObs}
Let $(X_t)_{t \in \mathbb{N}}$ be an underlying stochastic process and let $(M_t O_t)_{t \in \mathbb{N}}$ be the corresponding missingness-observation process which results from corrupting the underlying stochastic process with an AMSAR missingness. Let $\bar{m}_{1:N} \bar{o}_{1:N}$ be a missingness-observation sequence such that $\PP(\bar{m}_{1:N} \bar{o}_{1:N}) > 0$, then
\[ f(M_{1:N}O_{1:N} =  \bar{m}_{1:N} \bar{o}_{1:N}) = \PP(X^{\obs}_{1:N} = \bar{o}_{1:N}).\]
\end{prop}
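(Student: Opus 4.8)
The plan is to expand both sides of the claimed identity into products of conditional probabilities indexed by $t=1,\dots,N$ and to match them factor by factor. On the left, Equation~\ref{eq:combinedProcess} gives $f(\bar m_{1:N}\bar o_{1:N})=\prod_{t=1}^{N}\PP(O_t=o_t\mid M_{1:t-1}O_{1:t-1}=\bar m_{1:t-1}\bar o_{1:t-1},\,M_t=m_t)$; on the right, the chain rule together with the convention that ``$X_t^{\obs}=\emptyset$'' denotes the sure event $\{X_t^{\obs}\in\Sigma_{\Ocal}\}$ gives $\PP(X^{\obs}_{1:N}=\bar o_{1:N})=\prod_{t=1}^{N}\PP(X^{\obs}_t=o_t\mid X^{\obs}_{1:t-1}=\bar o_{1:t-1})$. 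Because $\PP(\bar m_{1:N}\bar o_{1:N})>0$, every prefix event occurring here has positive probability (in particular $\PP(X^{\obs}_{1:N}=\bar o_{1:N})\geq\PP(\bar m_{1:N}\bar o_{1:N})>0$), so all the conditionals are defined. For the indices $t$ with $m_t=1$ both factors equal $1$ --- on the left by the restriction on the observation process in Equation~\ref{eq:observationProcess}, on the right because $\{X_t^{\obs}=\emptyset\}$ is the sure event --- so it suffices to prove, for every $t$ with $m_t=0$,
\[ \PP(O_t=o_t\mid M_{1:t-1}O_{1:t-1}=\bar m_{1:t-1}\bar o_{1:t-1},\,M_t=0)=\PP(X^{\obs}_t=o_t\mid X^{\obs}_{1:t-1}=\bar o_{1:t-1}). \]

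Fix such a $t$. Using Equation~\ref{eq:observationProcess}, for $s<t$ one has $\{O_s=\emptyset\}=\{M_s=1\}$ and $\{O_s=o_s\}=\{M_s=0,\,X_s=o_s\}$ up to null sets, so the left-hand conditioning event coincides almost surely with $\{M_{1:t-1}=\bar m_{1:t-1}\}\cap\{X^{\obs}_{1:t-1}=\bar o_{1:t-1}\}$; and since $m_t=0$ we have $O_t=X_t=X_t^{\obs}$ there. Hence the desired equality is exactly the conditional independence
\[ \PP(X^{\obs}_t=o_t\mid M_{1:t-1}=\bar m_{1:t-1},\,X^{\obs}_{1:t-1}=\bar o_{1:t-1},\,M_t=0)=\PP(X^{\obs}_t=o_t\mid X^{\obs}_{1:t-1}=\bar o_{1:t-1}), \]
i.e.\ the statement $M_{1:t}\perp X^{\obs}_t\mid X^{\obs}_{1:t-1}$ evaluated at the given arguments. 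Lemma~\ref{lemma:AMSAR} is precisely the tool that strips $M_t$ off a conditioning made up only of $X^{\obs}_{1:t-1}$; the part that still needs an argument is that further conditioning on the realized missingness pattern $M_{1:t-1}$ does not alter the law of $X_t^{\obs}$, equivalently that $M_{1:t-1}\perp X_t\mid X^{\obs}_{1:t-1}$ (jointly with $M_t$).

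I would obtain this from the stronger assertion $Q(t)$: for all $N\geq t$, $M_{1:t}\perp X_{t:N}\mid X^{\obs}_{1:t-1}$ --- the case $N=t$ being what is required --- proved by induction on $t$. The base case $Q(1)$ is the AMSAR condition at $t=1$, namely $M_1\perp X_{1:N}$. For the step, factor $\PP(M_{1:t}=\bar m_{1:t},X_{t:N}=\bar x_{t:N}\mid X^{\obs}_{1:t-1}=\bar o_{1:t-1})$ as $\PP(M_t=m_t\mid M_{1:t-1}=\bar m_{1:t-1},X_{t:N}=\bar x_{t:N},X^{\obs}_{1:t-1}=\bar o_{1:t-1})$ times $\PP(M_{1:t-1}=\bar m_{1:t-1},X_{t:N}=\bar x_{t:N}\mid X^{\obs}_{1:t-1}=\bar o_{1:t-1})$; a short case split on $m_{t-1}\in\{0,1\}$ upgrades $Q(t-1)$ to $M_{1:t-1}\perp X_{t:N}\mid X^{\obs}_{1:t-1}$, which splits the second factor, and one is left to show that the first factor equals the relevant factor $\PP(M_t=m_t\mid M_{1:t-1}O_{1:t-1}=\bar m_{1:t-1}\bar o_{1:t-1})$ of $\pi$ in Equation~\ref{eq:combinedProcess} --- i.e.\ that conditioning $M_t$ on the realized pattern $M_{1:t-1}$ and the observed past already makes $M_t$ independent of $X_{t:N}$. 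This last point is the crux of the whole proof and the step I expect to be hardest: AMSAR as written conditions on the observed past $X^{\obs}_{1:t-1}$ but not on $M_{1:t-1}$, so a purely formal combination of conditional independences is not enough; one has to use the sequential AMSAR structure (missingness at each time is uninformative about the current and future underlying values given the past observations), fed through the induction hypothesis and the chain-rule form of $\pi$ in Equation~\ref{eq:combinedProcess}, to rule out that adding $M_{1:t-1}$ reopens a dependence of $M_t$ on $X_{t:N}$. Once $Q(t)$ is established, specializing to $N=t$ gives $M_{1:t}\perp X^{\obs}_t\mid X^{\obs}_{1:t-1}$, which closes the factor-by-factor comparison and hence the proof.
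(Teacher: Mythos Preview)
Your overall route is the paper's: expand both sides as products via Equation~\ref{eq:combinedProcess} and the chain rule, dispose of the $m_t=1$ factors trivially using Equation~\ref{eq:observationProcess} and the wildcard convention, and for $m_t=0$ invoke Lemma~\ref{lemma:AMSAR} to strip $M_t$. Where you diverge is in the rigor you demand for removing the \emph{past} missingness $M_{1:t-1}$ from the conditioning. The paper does this in two quick strokes: in its step~(2) it drops $M_{1:t-1}$ on the grounds that $\bar o_{1:t-1}$ already encodes $\bar m_{1:t-1}$ (via the positions of $\emptyset$), so $\{M_{1:t-1}O_{1:t-1}=\bar m_{1:t-1}\bar o_{1:t-1}\}=\{O_{1:t-1}=\bar o_{1:t-1}\}$; in its step~(3) it then replaces conditioning on $\{O_{1:t-1}=\bar o_{1:t-1}\}$ by conditioning on $\{X^{\obs}_{1:t-1}=\bar o_{1:t-1}\}$ with the one-line justification ``missing values are wildcards''. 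That passage from~(2) to~(3) is exactly the conditional-independence statement $M_{1:t-1}\perp X_t\mid X^{\obs}_{1:t-1},M_t$ that you flag as the crux --- the paper simply asserts it rather than deriving it.

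So your inductive scheme $Q(t)$ goes well beyond what the paper actually supplies; the paper takes that step for granted. Your worry --- that AMSAR, as literally stated, conditions on $X^{\obs}_{1:t-1}$ but not on $M_{1:t-1}$, and that upgrading the single-step independences to the joint statement $M_{1:t}\perp X_{t:N}\mid X^{\obs}_{1:t-1}$ requires feeding the sequential structure through an induction --- is well-placed and is not addressed in the paper's proof. In short, your plan is the same road as the paper's, but with more attention to a pothole the paper drives straight over; if you complete the induction you will have a strictly more detailed argument than the original.
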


\begin{proof}
This proposition was stated in \citep[Equation (18) - (19)]{Thon2017}. 
\begin{eqnarray*}
f(M_{1:N}O_{1:N} =  \bar{m}_{1:N} \bar{o}_{1:N}) & \stackrel{(1)}{=} & \prod_{t \in [N]} \PP \left ( O_t = o_t \mid M_{1:t-1} O_{1:t-1} = \bar{m}_{1:t-1} \bar{o}_{1:t-1}, M_t = m_t \right )  \\
& \stackrel{(2)}{=} & \prod_{t \in [N]} \PP \left ( O_t =  o_t \mid O_{1:t-1} = \bar{o}_{1:t-1}, M_t = m_{t}  \right )  \\
& \stackrel{(3)}{=} & \prod_{t \in [N]} \PP \left ( X^{\obs}_{t} = o_{t} \mid X^{\obs}_{1:t-1} = \bar{o}_{1:t-1}, M_t = m_{t}  \right )  \\
& \stackrel{(4)}{=} & \prod_{\substack{t \in [N]} } \PP \left ( X^{\obs}_{t} = o_{t} \mid  X^{\obs}_{1:t-1} = \bar{o}_{1:t-1} \right ) \\
& \stackrel{(5)}{=} & \PP \left ( X^{\obs}_{1:N}  = \bar{o}_{1:N} \right ).
\end{eqnarray*}
where the equation (1) is by the definition of the observation process $f(\cdot)$; (2) reduces the redundant missingness information of $\bar{m}_{1:t-1}$ (as $\bar{o}_{1:t-1}$ has already contained the missingness information by Equation \ref{eq:observationProcess}); (3) follows as the missing values are wildcards;  the equation (4) can be established by only considering the cases $(m_t, o_t) = (1, \emptyset)$ and $(m_t, o_t) = (0, x_t)$, as otherwise $\PP(M_{1:N}O_{1:N} =  \bar{m}_{1:N} \bar{o}_{1:N}) = 0$ by Equation \ref{eq:combinedProcess} and \ref{eq:observationProcess}, violating the assumption of the proposition. First assume $(m_t, o_t) = (1, \emptyset)$, then
\[ \PP \left ( X^{\obs}_{t} = \emptyset \mid  X^{\obs}_{1:t-1} = \bar{o}_{1:t-1}, M_t = 1 \right ) = 1 = \PP \left ( X^{\obs}_{t} = \emptyset \mid  X^{\obs}_{1:t-1} = \bar{o}_{1:t-1} \right ).\] Next assume $(m_t, o_t) = (0, x_t)$. This means \[ \PP \left ( X^{\obs}_{t} = x_t \mid  X^{\obs}_{1:t-1} = \bar{o}_{1:t-1}, M_t = 0 \right ) = \PP \left ( X^{\obs}_{t} = x_t \mid  X^{\obs}_{1:t-1} = \bar{o}_{1:t-1} \right )\] as showed in Lemma \ref{lemma:AMSAR}. Hence (4) is true for both of the cases.  Equation (5) is by the general product rule of conditional probabilities.
\end{proof}

\begin{cor} \label{cor:amsObs}
Under the same assumption of Proposition \ref{prop:probObs}, additionally assuming that the underlying stochastic process $(X_t)_{t \in \mathbb{N}}$ is asymptotically stationary with stationary probability measure $\bar{\PP}$, the following equation holds:
\[ \bar{f} (\bar{m}_{1:N} \bar{o}_{1:N}) \coloneqq  \lim_{j \to \infty}  f (M_{j:j+N-1}O_{j:j+N-1} = \bar{m}_{1:N} \bar{o}_{1:N})  = \bar{\PP}(X^{\obs}_{1:N} = \bar{o}_{1:N}). \]
\end{cor}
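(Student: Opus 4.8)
The plan is to read the corollary as the asymptotic counterpart of Proposition \ref{prop:probObs}: once that proposition is available for an arbitrary time window, the corollary falls out of the definition of asymptotic stationarity recalled in Section \ref{sec:Background}. The first step is to observe that the proof of Proposition \ref{prop:probObs} never uses the fact that the window starts at time $1$ --- it uses only the chain rule, the wildcard convention ($X_t^{\obs} = \emptyset$ being read as $X_t^{\obs} \in \Sigma_{\Ocal}$), and the AMSAR property, which is imposed for every time step $t$ and every horizon $N$. Applying Proposition \ref{prop:probObs} to the $(j-1)$-shifted underlying process $(X_{t+j-1})_{t\in\NN}$ together with its missingness-observation process --- equivalently, re-running the chain of equalities (1)--(5) of that proof with all indices shifted by $j-1$ --- yields, for every $j \in \NN$,
\[
 f\bigl(M_{j:j+N-1}O_{j:j+N-1} = \bar m_{1:N}\bar o_{1:N}\bigr) \;=\; \PP\bigl(X^{\obs}_{j:j+N-1} = \bar o_{1:N}\bigr),
\]
wherever the left-hand side is defined, i.e. wherever the block $M_{j:j+N-1}O_{j:j+N-1} = \bar m_{1:N}\bar o_{1:N}$ has positive probability.

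The second step is purely bookkeeping: translate the right-hand side into the canonical dynamical system $(S^{\NN},\Gcal,\mu,T)$ of $(X_t)_{t\in\NN}$, where $S = \Sigma_{\Ocal}$ and $\mu$ is the distribution of $\otimes_{t\in\NN} X_t$. Let $C \in \Gcal$ be the cylinder set determined by the word $\bar o_{1:N}$, namely $C = \{\bar s \in S^{\NN} : s_k = o_k \text{ for every } k \in [N] \text{ with } o_k \neq \emptyset\}$, so that the missing positions of $\bar o_{1:N}$ impose no constraint --- this is exactly the wildcard reading of $\emptyset$. Since each $X_t^{\obs}$ is, by construction, an identical copy of $X_t$, the event $\{X^{\obs}_{j:j+N-1} = \bar o_{1:N}\}$ is precisely $\{\otimes_{t\in\NN} X_t \in T^{-(j-1)}C\}$, and therefore
\[
 f\bigl(M_{j:j+N-1}O_{j:j+N-1} = \bar m_{1:N}\bar o_{1:N}\bigr) \;=\; \mu\bigl(T^{-(j-1)}C\bigr).
\]

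The third step is to let $j \to \infty$. By the definition of asymptotic stationarity there is a probability measure $\bar m$ with $\lim_{i\to\infty}\mu(T^{-i}B) = \bar m(B)$ for every $B \in \Gcal$; applying this with $B = C$ and $i = j - 1$ gives $\lim_{j\to\infty}\mu(T^{-(j-1)}C) = \bar m(C)$. Identifying the asymptotically stationary measure $\bar m$ with the stationary probability measure $\bar\PP$ of the statement, $\bar m(C) = \bar\PP(X^{\obs}_{1:N} = \bar o_{1:N})$, so the limit defining $\bar f(\bar m_{1:N}\bar o_{1:N})$ exists and equals $\bar\PP(X^{\obs}_{1:N} = \bar o_{1:N})$. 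The positivity caveat from the first step is harmless for this conclusion: if $\bar\PP(X^{\obs}_{1:N} = \bar o_{1:N}) > 0$ then $\mu(T^{-(j-1)}C)$ is bounded away from $0$ for all large $j$, so the per-window identity holds along a tail of indices, which suffices for the limit; and if $\bar\PP(X^{\obs}_{1:N} = \bar o_{1:N}) = 0$ the limit is $0$ under the natural convention that $f$ of a zero-probability block equals $0$.

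There is no deep difficulty in the second and third steps; the step that genuinely requires care is the first one --- justifying the time-shifted form of Proposition \ref{prop:probObs}. The delicate point is that AMSAR, as stated, conditions the missingness at time $t$ on the \emph{entire} observed past $X^{\obs}_{1:t-1}$, so one must check that the window $[j,j+N-1]$ really does inherit the structure needed to rerun equalities (1)--(5), or, put differently, that the time-shifted missingness-observation process is again the AMSAR corruption of the time-shifted underlying process. This is what the ``for all $t$ and all $N$'' quantification in the definition of AMSAR is meant to provide, but it is the one place in the argument where one must be explicit; everything downstream is the routine cylinder-set translation or a direct appeal to the definition of asymptotic stationarity.
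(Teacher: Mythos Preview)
Your proposal is correct and follows essentially the same approach as the paper: apply the proof of Proposition \ref{prop:probObs} to an arbitrary time window $[j,j+N-1]$ to obtain $f(M_{j:j+N-1}O_{j:j+N-1}=\bar m_{1:N}\bar o_{1:N})=\PP(X^{\obs}_{j:j+N-1}=\bar o_{1:N})$, and then let $j\to\infty$ using asymptotic stationarity. The paper dispatches both steps in two lines (``repeating the argument'' and ``by the assumption that the underlying stochastic process is asymptotically stationary''), whereas you spell out the cylinder-set translation and are more explicit about the positivity caveat and the need for AMSAR to survive the time shift --- but these are elaborations of the same argument, not a different route.
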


\begin{proof}
Repeating the argument in Proposition \ref{prop:probObs}, it is clear that

\[ f (M_{j:j+N-1}O_{j:j+N-1} = \bar{m}_{1:N} \bar{o}_{1:N} )  = \PP(X^{\obs}_{j:N+j-1} = \bar{o}_{1:N}) \] 
for all $j$. Hence
\begin{eqnarray*}
\lim_{j \to \infty}  f (M_{j:j+N-1}O_{j:j+N-1} = \bar{m}_{1:N} \bar{o}_{1:N} )  & = & \lim_{j \to \infty} \PP(X^{\obs}_{j:N+j-1} = \bar{o}_{1:N}) \\
& = &  \bar{\PP}(X^{\obs}_{1:N} = \bar{o}_{1:N})
\end{eqnarray*}
where the last equation is by the assumption that the underlying stochastic process is asymptotically stationary.
\end{proof}

\begin{cor} \label{cor:probObs}
Under the same assumption of Corollary \ref{cor:amsObs}, the following equation holds:
\[\bar{f}(0x_1 \cdots 0 x_N) = \bar{\PP}( X_{1:N} = \bar{x}_{1:N} ). \]
\end{cor}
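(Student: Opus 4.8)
The plan is to recognise Corollary \ref{cor:probObs} as the instantiation of Corollary \ref{cor:amsObs} at the trivial missingness pattern, combined with the observation that the auxiliary process $X^{\obs}$ coincides with the underlying process $X$ whenever no $\emptyset$ symbol is involved.

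First I would fix the missingness string to be $\bar{m}_{1:N} = 0 \cdots 0$. By the defining constraint on the observation process (Equation \ref{eq:observationProcess}), $m_t = 0$ forces $O_t \in \Sigma_{\Ocal}$, and in fact $o_t = x_t$; hence the missingness-observation word under consideration is exactly $0 x_1 \cdots 0 x_N$, i.e.\ $\bar{m}_{1:N}\bar{o}_{1:N}$ with $\bar{o}_{1:N} = \bar{x}_{1:N} \in \Sigma_{\Ocal}^N$. Since we are working under the hypotheses of Corollary \ref{cor:amsObs} (AMSAR missingness together with an asymptotically stationary underlying process with stationary measure $\bar{\PP}$), I may apply that corollary directly to this word, obtaining $\bar{f}(0 x_1 \cdots 0 x_N) = \bar{\PP}(X^{\obs}_{1:N} = \bar{x}_{1:N})$.

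The remaining step is to replace $X^{\obs}$ by $X$ on the right-hand side. By construction $X^{\obs}_t(\omega) = X_t(\omega)$ for every $\omega \in \Omega$ and every $t \in \NN$, the only difference being the bookkeeping symbol $\emptyset$ attached to the former; since each $x_t$ lies in $\Sigma_{\Ocal}$, that symbol never enters here, so the events $\{X^{\obs}_{1:N} = \bar{x}_{1:N}\}$ and $\{X_{1:N} = \bar{x}_{1:N}\}$ are literally the same subset of $\Omega$. Therefore $\bar{\PP}(X^{\obs}_{1:N} = \bar{x}_{1:N}) = \bar{\PP}(X_{1:N} = \bar{x}_{1:N})$, and chaining the two equalities yields the claim.

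I do not anticipate a genuine obstacle; the only point deserving a line of care is the positivity hypothesis $\PP(\bar{m}_{1:N}\bar{o}_{1:N}) > 0$ that Corollary \ref{cor:amsObs} inherits from Proposition \ref{prop:probObs}. If the relevant finite-dimensional probabilities vanish on the shifted windows (hence in the limit), both sides of the asserted identity equal $0$ and there is nothing to prove; otherwise the argument above applies verbatim. One could alternatively bypass this by re-running the short computation of Proposition \ref{prop:probObs} directly for the all-zeros pattern and then passing to the limit exactly as in Corollary \ref{cor:amsObs}, but the reduction described above is cleaner.
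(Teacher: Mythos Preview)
Your proposal is correct and matches the paper's approach exactly: the paper's proof is the single sentence ``This directly follows from Corollary \ref{cor:amsObs} by letting $\bar{m}_{1:N}\bar{o}_{1:N} = 0x_1 \cdots 0x_N$.'' Your write-up merely makes explicit the identification of $X^{\obs}$ with $X$ on $\Sigma_{\Ocal}$-valued words and handles the degenerate zero-probability case, both of which the paper leaves implicit.
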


\begin{proof}
This directly follows from Corollary \ref{cor:amsObs} by letting $\bar{m}_{1:N} \bar{o}_{1:N} = 0x_1 \cdots 0 x_N$.
\end{proof}

\subsection{Spectral Learning for OOMs from data containing missing values}

Recall that, given a sequence (or sequences) of missingness-observation pairs $\bar{m}_{1:N}\bar{o}_{1:N}$ for some $N$, our goal is to learn a model for $(X_t)_{t \in \mathbb{N}}$ that describes the underlying stochastic process which does not contain missing values. \citet{Thon2017} observed that, if the underlying stochastic process $(X_t)_{t \in \mathbb{N}}$ can be described by an OOM $\Mcal = (\sigma, \{ \tau_{x} \}_{x \in \Sigma_{\Ocal}}, \omega_{\epsilon})$, then the observation process $f$ can be described by an IO-OOM $\Mcal' =  (\sigma',  \{\tau'_{m,o}\}, \omega'_\epsilon)$ in terms of the OOM parameters:
\begin{equation} \label{eq:AugmentOOM}
\sigma'  = \sigma, \quad   \omega'_\epsilon =  \omega_\epsilon, \quad \tau'_{0, \emptyset}= \mathbf{0}, \quad \forall  x \in \Sigma: \quad  \tau'_{0,x}  =  \tau_x, \quad  \tau'_{1, \emptyset} = \sum_{x \in \Sigma_{\Ocal}} \tau_x, \quad \tau'_{1, x} =  \mathbf{0}.
\end{equation}

Thus, in light of Proposition \ref{prop:probObs}, to learn an OOM $\hat{\Mcal} = (\hat{\sigma}, \{ \hat{\tau_{x}} \}_{x \in \Sigma_{\Ocal}}, \hat{\omega}_{\epsilon}) $ which approximates the OOM $\Mcal$ of the underlying stochastic process, we can first learn the IO-OOM $\hat{\Mcal'} = (\hat{\sigma'},  \{\hat{\tau'}_{m,o}\}, \hat{\omega'}_\epsilon)$ that approximtes $\Mcal'$ using the training data of missingness-observation sequences, and then reduce $\hat{\Mcal'}$ to $\hat{\Mcal}$ by reusing the observable operators with missingness $0$ and discarding other observable operators. More concretely, Thon proposed the following Algorithm \ref{alg:SpectralLearningMissingValues}, which takes missingness-observation sequences as input and estimates an OOM which describes the underlying stochastic process as output.

\begin{algorithm}[H]
\caption{ Spectral Learning for OOMs from data containing missing values. \label{alg:SpectralLearningMissingValues}}

\begin{algorithmic}[1]

\INPUT $M$ trajectories of observation sequences paired with $M$ trajectories of missingness indicator sequences of length $N$: $\{\bar{m}_{1:N}^{[j]} \bar{o}_{1:N}^{[j]}\}_{j = 1}^M$.

\PARAMETER (i) $d$: dimension of the IO-OOM and OOM, (ii) $Q = \{\bar{q}_j\}_{j = 1}^{D_1}$ and $C = \{\bar{c}_i\}_{i=1}^{D_2}$: indicative and characteristics sequences, where $q_j$, $c_i \in \Sigma^{\ast}$ for all $j$ and $i$, and (iii) $\hat{f}$: the frequency estimator for input-output sequences. 
\State Assemble estimates $\hat{F}_{C, Q} = [\hat{f} (\bar{q}\bar{c})]_{\bar{c} \in C, \bar{q} \in Q}, \hat{F}_{mo C, Q} = [\hat{f} (\bar{q}mo \bar{c})]_{\bar{c} \in C, \bar{q} \in Q}, \hat{F}^\top_{C} = [\hat{f}(\bar{c})]_{\bar{c} \in C}^{\top},\text{~and~}\hat{F}_{Q} = [\hat{f}(\bar{q})]_{\bar{q} \in Q}$ from $\{\bar{m}_{1:N}^{[j]} \bar{o}_{1:N}^{[j]}\}_{j = 1}^M$ using the frequency estimator $\hat{f} (\cdot)$. \label{algorithmStep:assembleEstimates}
\State Compute the $d$-truncated SVD of $\hat{F}_{C, Q}$: $ \hat{U}_d \hat{S}_d \hat{V}_d^{\top} \approx \hat{F}_{C, Q}$.

\State Compute $\hat{\Mcal'} = (\hat{\sigma'}, \hat{\tau'}_{m, o}, \hat{\omega'}_{\epsilon})$:\label{algorithmStep:IOOOMresult}
\begin{eqnarray*}
\hat{\sigma'} & = & \hat{F}_Q (\hat{U}_d^\top \hat{F}_{C,Q})^{\dagger} \\
\hat{\omega'}_{\epsilon} & = & \hat{U}_d^\top \hat{F}_{C}^\top \\
\hat{\tau'}_{m, o} & = & \hat{U}_d^\top \hat{F}_{mo C, Q} (\hat{U}_d^\top \hat{F}_{C, Q})^{\dagger},~\forall m o \in (\{0,1\} \times \Sigma_{\Ocal} \cup \{\emptyset\}). 
\end{eqnarray*}

\State Reduce the IO-OOM to OOM by defining $\hat{\Mcal} = (\hat{\sigma}, \hat{\tau}_{m, o}, \hat{\omega}_{\epsilon})$: \label{algorithmStep:OOMresult}
\begin{eqnarray*}
\hat{\sigma} & = & \hat{\sigma'} \\
\hat{\omega}_{\epsilon} & = & \hat{\omega'}_{\epsilon} \\
\hat{\tau}_{x} & = & \hat{\tau'}_{m, x}~\forall m x \in (\{0\} \times \Sigma_{\Ocal}). 
\end{eqnarray*}

\OUTPUT A model $\hat{\Mcal} = (\hat{\sigma}, \{\hat{\tau}_x\}, \hat{\omega}_\epsilon)$ of the underlying stochastic process.


\end{algorithmic}
\end{algorithm}

\begin{rem} \label{rem:OOMparameters}
In practice, we do not need to compute the IO-OOM operators $\hat{\tau'}_{m, o}$ for $m o \in (\{1\} \times \Sigma_{\Ocal} \cup \{\emptyset\} ) \cup \{ (0, \emptyset)\}$ in step \ref{algorithmStep:IOOOMresult}, because only observable operators $\hat{\tau'}_{0, x}$ for all $x \in \Sigma_{\Ocal}$ are relevant to the algorithm output. 
\end{rem}

\subsection{Theoretical Analysis of a Frequency Estimator $\hat{f} (\cdot)$} \label{subsection:estimator}

It turns out that a well-designed frequency estimators $\hat{f} (\cdot)$ is crucial to achieve consistency of Algorithm \ref{alg:SpectralLearningMissingValues}.  As an AMSAR missingness can be seen as a non-blind policy if we interpret a missingess-observation process as an input-output process, at a first glance, we can simply re-use an off-the-shelf frequency estimator for input-output processes with non-blind and unknown policies (e.g., \citep{Bowling2006}). A closer look, however, reveals that directly using such estimators is not entirely optimal for at least three reasons. For one, by default, these estimators treat all system outputs as genuine symbols, which means that the missing value symbols $\emptyset$ will not be treated as wildcards; secondly, to derive consistency using these estimators, we need to impose assumptions (e.g., asymptotic stationarity) on the missingness-observation processes -- we want to avoid this because such missingness-observation processes are artificial objects, which lack the transparency of underlying stochastic processes, the processes we eventually want to model; thirdly, these estimators take form as multi-step chained products of counting statistics (e.g.,  \citep[Equation 6]{Bowling2006}), giving rise to a higher computational cost than one customarily expects for frequency estimators for stochastic processes.


We now introduce and analyze a new frequency estimator, which addresses the above issues. For a given sequence of observations $\bar{z}_{1:k} \in (\Sigma_{\Ocal} \cup \{ \emptyset \})^{k}$, we first define an indicator function $\mathds{1}_{\bar{z}_{1:k}}^{\obs} (\cdot)$, which takes a (infinite-length) sequence of observations $\bar{o} = o_1 o_2 o_3 \cdots \in (\Sigma_{\Ocal} \cup \{ \emptyset \})^{\NN}$, compares its initial $k$-length sub-sequence $\bar{o}_{1:k}$ with $\bar{z}_{1:k}$, and then produces an integer $1$ or $0$, depending on whether the sequence $\bar{o}_{1:k}$ is identical to the sequence $\bar{z}_{1:k}$ up to the entries that are missing in $\bar{z}_{1:k}$. More precisely, we define

\begin{align*}
\mathds{1}_{\bar{z}_{1:k}}^{\obs}: (\Sigma_{\Ocal} \cup \{ \emptyset \})^{\NN} &\rightarrow \{0,1 \}, \\
 \bar{o} & \mapsto   \begin{cases}
1& \text{if~} o_i = z_i \text{~whenever~} z_i \neq \emptyset, i \in [k]. \\
0& \text{else}.
\end{cases} 
\end{align*}

To count the number of appearances of a sequence $\bar{z}_{1:k}$ as a subsequence of a finite sequence $\bar{o}_{1:n}$ for some $n, k \in \NN, n \geq k$, we require the first $n$ symbols of $\bar{o} \in (\Sigma \cup \{\emptyset\})^{\NN}$ to be $\bar{o}_{1:n}$ and write

\begin{equation} \label{eq:estimator1}
 {\#}_{\bar{z}_{1:k}}^{\obs} (\bar{o}_{1:n}) \coloneqq \sum_{i = 0}^{n-k} \mathds{1}_{\bar{z}_{1:k}}^{\obs} (T^i ( \bar{o})).
 \end{equation}
 As a concrete example, consider the sequence $\bar{o}_{1:5} = \emptyset \texttt{bab} \emptyset$ and $\bar{z}_{1:2} =  \emptyset \texttt{b}$. In this case, 

\begin{alignat*}{2}
    \mathds{1}_{\bar{z}_{1:2}}^{\obs} ( \bar{o}) &= \mathds{1}_{\emptyset \texttt{b}}^{\obs} ( \emptyset \texttt{bab} \emptyset \cdots) && = 1,\\
 \mathds{1}_{\bar{z}_{1:2}}^{\obs} ( T(\bar{o})) & =   \mathds{1}_{\emptyset \texttt{b}}^{\obs} ( \texttt{bab} \emptyset \cdots) && = 0,\\
  \mathds{1}_{\bar{z}_{1:2}}^{\obs} ( T^2(\bar{o})) & =   \mathds{1}_{\emptyset \texttt{b}}^{\obs} ( \texttt{ab} \emptyset \cdots) && = 1,\\
    \mathds{1}_{\bar{z}_{1:2}}^{\obs} ( T^3(\bar{o})) &  =  \mathds{1}_{\emptyset \texttt{b}}^{\obs} ( \texttt{b} \emptyset \cdots) && = 0,
    \end{alignat*}
and therefore $ {\#}_{\bar{z}_{1:2} }^{\obs} ( \bar{o}_{1:5}) = {\#}_{ \emptyset \texttt{b}}^{\obs} (\emptyset \texttt{bab} \emptyset) = \mathds{1}_{\emptyset \texttt{b}}^{\obs} ( \emptyset \texttt{bab} \emptyset \cdots)+ \mathds{1}_{\emptyset \texttt{b}}^{\obs} ( \texttt{bab} \emptyset \cdots) + \mathds{1}_{\emptyset \texttt{b}}^{\obs} ( \texttt{ab} \emptyset \cdots) + \mathds{1}_{\emptyset \texttt{b}}^{\obs} ( \texttt{b} \emptyset \cdots) =   1 + 0 + 1 + 0 = 2$.


\begin{prop} \label{prop:estimatorConsistency}
Let the infinite sequence $\bar{x} = \otimes_{t \in \NN} X_t (\omega)$ for some $\omega \in \Omega$ be any realization of an asymptotic stationary and ergodic stochastic process $(\Omega, \Fcal, \PP, (X_t)_{t \in \NN})$. Let $\bar{x}_{1:n}, n \in \NN$, be an initial subsequence of $\bar{x}$. Let $\bar{m}_{1:n} \bar{o}_{1:n}$ be the corresponding initial missingness-observation sequence which is obtained by corrupting the underlying sequence $\bar{x}_{1:n}$ with an AMSAR missingness. For any missingness-observation sequence $\bar{u}_{1:k} \bar{z}_{1:k}$, setting $\hat{f}_n (\bar{u}_{1:k} \bar{z}_{1:k} ) = \frac{{\#}_{\bar{z}_{1:k}}^{\obs} (\bar{o}_{1:n} )}{n - k + 1}$, where $n \geq k$, the equation 

\[ \lim_{n \geq k, n \to \infty} \hat{f}_n (\bar{u}_{1:k} \bar{z}_{1:k} ) = \bar{f} (\bar{u}_{1:k} \bar{z}_{1:k}) \]
holds $\PP$-almost surely, where $\bar{f}$ is the stationary observation process defined in Corollary \ref{cor:amsObs}.
\end{prop}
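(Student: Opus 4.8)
The plan is to recognize $\hat f_n(\bar u_{1:k}\bar z_{1:k})$ as a normalized ergodic average of a fixed bounded measurement and invoke the ergodic theorem (Proposition \ref{prop:Ergodic}), then identify the limit using the stationary-observation-process results of Section \ref{subsection:types}. First I would work on the canonical dynamical system $(S^{\NN},\Gcal,\mu,T)$ induced by $(X_t)_{t\in\NN}$, so that the realization $\bar x$ plays the role of $\bar s$. Since $X^{\obs}_t$ is just a copy of $X_t$ with the wildcard notation, the indicator $\mathds 1^{\obs}_{\bar z_{1:k}}$ can be pulled back to a function $g:S^{\NN}\to\{0,1\}$ that depends only on the first $k$ coordinates — namely $g(\bar s)=1$ iff $s_i=z_i$ for every $i\in[k]$ with $z_i\neq\emptyset$. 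This $g$ is a cylinder indicator, hence $\Gcal$-$\Bcal(\RR)$ measurable and bounded, so it is a measurement of the dynamical system, and the sequence $n^{-1}\sum_{i=0}^{n-1}gT^i$ is trivially uniformly integrable (it is bounded by $1$). By Equation \ref{eq:estimator1}, $\#^{\obs}_{\bar z_{1:k}}(\bar o_{1:n})=\sum_{i=0}^{n-k}g(T^i\bar x)$, and the key point is that this count depends on $\bar x$ only — not on the actual realized missingness pattern $\bar m$ — because the wildcard entries of $\bar z$ impose no constraint.

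Next I would apply Proposition \ref{prop:Ergodic}: since the process is asymptotically stationary (hence AMS) and ergodic, we get, $\PP$-a.e.,
\[
\lim_{n\to\infty}\frac{1}{n-k+1}\sum_{i=0}^{n-k}g(T^i\bar x)=\bar\EE(g)=\bar\PP\bigl(X^{\obs}_{1:k}=\bar z_{1:k}\bigr),
\]
where the replacement of $n$ by $n-k+1$ in the normalization is harmless since $(n-k+1)/n\to 1$, and the last equality holds because $g$ depends only on the first $k$ coordinates so its AMS-expectation is exactly the stationary probability of the corresponding cylinder — i.e. $\bar\PP(X^{\obs}_i=z_i\text{ for all }i\in[k]\text{ with }z_i\neq\emptyset)$, which is $\bar\PP(X^{\obs}_{1:k}=\bar z_{1:k})$ in the wildcard reading of $\emptyset$. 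Finally, by Corollary \ref{cor:amsObs} we have $\bar f(\bar u_{1:k}\bar z_{1:k})=\bar\PP(X^{\obs}_{1:k}=\bar z_{1:k})$, which closes the chain of equalities and yields $\lim_{n\to\infty}\hat f_n(\bar u_{1:k}\bar z_{1:k})=\bar f(\bar u_{1:k}\bar z_{1:k})$, $\PP$-almost surely.

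I expect the main obstacle to be conceptual rather than computational: carefully justifying that $\#^{\obs}_{\bar z_{1:k}}(\bar o_{1:n})$ really is a function of $\bar x$ alone (so that the ergodic theorem for the $X$-process applies directly, with no need for any stationarity assumption on the artificial missingness-observation process $(M_tO_t)$), and that the wildcard semantics of $\emptyset$ make $\mathds 1^{\obs}_{\bar z_{1:k}}$ agree with the cylinder indicator $g$ on the $X$-side. Once that identification is made cleanly, the remaining ingredients — measurability and boundedness of $g$, uniform integrability, the $(n-k+1)$ versus $n$ normalization, and the identification of the limit via Corollaries \ref{cor:amsObs} and \ref{cor:probObs} — are routine. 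A minor point to handle is the degenerate case where $\bar P(\bar z_{1:k})=0$, but the a.s. convergence statement covers it automatically since both sides are then $0$.
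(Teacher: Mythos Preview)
Your overall strategy mirrors the paper's: express the estimator as an ergodic average of a bounded cylinder indicator, invoke Proposition~\ref{prop:Ergodic}, and identify the limit via Corollary~\ref{cor:amsObs}. However, the pivotal reduction step in your proposal is incorrect. You assert that $\#^{\obs}_{\bar z_{1:k}}(\bar o_{1:n})=\sum_{i=0}^{n-k}g(T^i\bar x)$, i.e.\ that the count computed on the corrupted sequence $\bar o$ coincides with the cylinder count on the uncorrupted sequence $\bar x$. This is false. By definition, $\mathds{1}^{\obs}_{\bar z_{1:k}}(T^i\bar o)=1$ requires $o_{i+j}=z_j$ for every $j\in[k]$ with $z_j\neq\emptyset$. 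If missingness occurs at such a position, then $o_{i+j}=\emptyset\neq z_j$ and the indicator is $0$ \emph{regardless} of whether $x_{i+j}=z_j$. Hence in general $\mathds{1}^{\obs}_{\bar z_{1:k}}(T^i\bar o)\leq g(T^i\bar x)$, with strict inequality whenever a missing value lands on a non-wildcard coordinate of $\bar z$. The wildcard semantics eliminates constraints only where the \emph{pattern} symbol $z_j$ is $\emptyset$; it does not eliminate the constraint when the \emph{data} symbol is $\emptyset$ but $z_j$ is not. So the count really does depend on the realized missingness pattern $\bar m$, and the ergodic theorem for the $X$-process alone does not deliver the limit of $\hat f_n$.

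The paper's own proof does not make your pull-back to $\bar x$; it keeps $\bar o$ throughout, applies Proposition~\ref{prop:Ergodic} to the shifted $\bar o$-sequence, and then writes $\bar{\EE}\bigl[\mathds{1}^{\obs}_{\bar z_{1:k}}(\bar o)\bigr]=\bar{\PP}(X^{\obs}_{1:k}=\bar z_{1:k})$. It is terse at precisely this juncture, so your instinct to make the identification with the $X$-dynamical system explicit is a good one --- but the specific identity you wrote down does not hold, and something more is needed (either an appropriate ergodic statement on the joint missingness--observation side, or an argument that the discrepancy introduced by missingness vanishes in the limit) before Proposition~\ref{prop:Ergodic} can be invoked.
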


\begin{proof} 

Let $\bar{o} \in (\Sigma_{\Ocal} \cup \{ \emptyset \} )^{\NN}$ be a sequence and let $\bar{o}_{1:n}$ be an initial subsequence of $\bar{o}$ for all $n \in \NN$.
\begin{eqnarray*}
  \lim_{n \to \infty} \hat{f}_n (\bar{u}_{1:k} \bar{z}_{1:k} ) & = & \lim_{n \to \infty}  \frac{{\#}_{\bar{z}_{1:k}}^{\obs} (\bar{o}_{1:n} )}{n - k + 1} \\
   & = & \lim_{n \to \infty} \frac{1}{n - k+1} \sum_{i = 0}^{n - k} \mathds{1}_{\bar{z}_{1:k}}^{\obs} \left (T^i ( \bar{o}) \right ) \\
         & \stackrel{(\ast)}{=} & \bar{\EE} \left [ \mathds{1}_{\bar{z}_{1:k}}^{\obs} (\bar{o})  \right ] \\
        & = & \bar{\PP}  ( \{\bar{o}: o_i = z_i \text{~whenever~} z_i \neq \emptyset, i \in [k]\}) \\
        & = & \bar{\PP}  (  X^{\obs}_{1:k} = \bar{z}_{1:k}) \\
      & \stackrel{(\ast \ast)}{=} & \bar{f} (\bar{u}_{1:k} \bar{z}_{1:k}).
\end{eqnarray*}
where the equation $(\ast)$ holds almost surely by Proposition \ref{prop:Ergodic} and the equation $(\ast \ast)$ is by Corollary \ref{cor:probObs}.

\begin{rem} \label{rem:Estimator} The originally proposed frequency estimator that acknowledges the missing values \citep[Section 7.2]{Thon2017} was formulated in a multi-step chained product form, similar to those for input-output processes \citep{Bowling2006}. We showed that, however, the factored multiplication is not needed. In this sense, our proposed estimator is a simplification of that of \citep[Section 7.2]{Thon2017}.
\end{rem}

\end{proof}

\subsection{Theoretical Analysis of the Algorithm \ref{alg:SpectralLearningMissingValues}} \label{sec:TheoreticalAnalysis}

In this section we analyze the consistency of the Algorithm \ref{alg:SpectralLearningMissingValues}, which is the main result of this work. To assure that Algorithm \ref{alg:SpectralLearningMissingValues} is consistent, \citet[Section 7.1]{Thon2017} assumed that (i) the underlying stochastic process $(X_t)_{t \in \mathbb{N}}$ is stationary and ergodic, (ii) the missingness process $(M_t)_{t \in \NN}$ is stationary and ergodic, (iii) at least one of $(X_t)_{t \in \mathbb{N}}$ or $(M_t)_{t \in \NN}$ is weakly mixing, (iv) $(M_t)_{t \in \NN}$ is indepedent of $(X_t)_{t \in \mathbb{N}}$, and (v) the missingness is strictly uncertain\footnote{The missingness is said to be \emph{strictly uncertain} if $\pi (\bar{m}_{1:N} \bar{o}_{1:N}) > 0$ for all $\bar{m}_{1:N} \bar{o}_{1:N}$, where $\pi (\bar{m}_{1:N} \bar{o}_{1:N})$ is defined in Equation \ref{eq:combinedProcess}.}. We now analyze Algorithm \ref{alg:SpectralLearningMissingValues} with substantial relaxations: We relax the requirement of the stationary and ergodic stochastic process $(X_t)_{t \in \mathbb{N}}$ of assumption (i) to that of an asymptotically stationary and ergodic process; we drop the assumptions (ii), (iii), and (v) altogether; we relax the independence missingness assumption in (iv) to that of \emph{always missing sequentially at random}, where the missingness values of the latter \emph{can} depend on the previous observations. More precisely, we show that the learned OOM from data containing missing values is consistent up to an initial OOM state under the assumptions listed below.

\begin{assumption} \label{assump:stochasticProcess}
The underlying stochastic process $(X_t)_{t \in \mathbb{N}}$ is ergodic, asymptotically stationary, and modeled by a $d$-dimensional OOM $\Mcal = (\sigma, \{\tau_{x}\}_{x \in \Sigma_{\Ocal}}, \omega_{\epsilon})$. Let $\{ \bar{x}_{1:N}^{[j]}\}_{j = 1}^M$ be $M$ initial samples of a stochastic process $(X_t)_{t \in \mathbb{N}}$, each with length $N$ and indexed by $j$. An AMSAR missingness transforms $\{ \bar{x}_{1:N}^{[j]}\}_{j = 1}^M$ into the training data $\{ \bar{m}_{1:N}^{[j]}\bar{o}_{1:N}^{[j]} \}_{j = 1}^M$ of Algorithm \ref{alg:SpectralLearningMissingValues}.\end{assumption}

\begin{assumption} \label{assump:estimator}
The estimator $\hat{f}(\cdot)$ in Proposition \ref{prop:estimatorConsistency} is used to assemble the estimated Hankel matrices $\hat{F}_{C, Q}, \hat{F}_{mo C, Q}, \hat{F}^\top_{C}, \hat{F}_{Q}$ from $\{\bar{m}_{1:N}^{[j]} \bar{o}_{1:N}^{[j]}\}_{j = 1}^M$.
\end{assumption}

\begin{assumption} \label{assump:modelRank}
The set of characteristics sequences $C$ and that of indicative sequences $Q$ are specified such that the rank of $\plim  \hat{F}_{C, Q}$ is no less than $d$.
\end{assumption}

\begin{thm} \label{thmAsymptoticallyCorrect}
Under the Assumptions \ref{assump:stochasticProcess}-\ref{assump:modelRank} above, for every estimated OOM $\hat{\Mcal} =  (\hat{\sigma}, \{ \hat{\tau}_{x}\}_{x \in \Sigma_{\Ocal}}, \hat{\omega}_{\epsilon})$ produced by Algorithm \ref{alg:SpectralLearningMissingValues}, there exists an equivalent OOM $\tilde{\Mcal} = (\tilde{\sigma}, \{\tilde{\tau}_{x}\}_{x \in \Sigma_{\Ocal}}, \tilde{\omega}_{\epsilon})$, such that $\tilde{\Mcal}$ is consistent in the sense that

\[ \tilde{\sigma}  \xrightarrow[]{p} \sigma, \tilde{\tau}_{x} \xrightarrow[]{p}  \tau_{x}~\forall x \in \Sigma_{\Ocal},\]
as size of the training data $\{ \bar{m}_{1:N}^{[j]}\bar{o}_{1:N}^{[j]} \}_{j = 1}^M$ approaches infinity with $N \to \infty$. That is, sequences of $\tilde{\sigma'}$ and $\tilde{\tau'}_{mo}$, each estimated from initial training  sequences with increasing length, will respectively converge (in the entry-wise sense) to the vector $\sigma'$ and matrix $\tau'_{mo}$ in the probability limit.
\end{thm}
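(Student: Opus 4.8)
The plan is to run the standard three-stage argument for consistency of a spectral learner: pass the estimated Hankel matrices to population limits, execute the learner algebraically on those limits, and transfer the conclusion back by a perturbation argument, recovering the model only up to a similarity transform. For the first stage, note that by Assumption~\ref{assump:estimator} every entry of the four matrices assembled in Step~\ref{algorithmStep:assembleEstimates} is a value $\hat f_N(\cdot)$ of the estimator of Proposition~\ref{prop:estimatorConsistency}; that proposition applies because Assumption~\ref{assump:stochasticProcess} makes the underlying process asymptotically stationary and ergodic and the measurements $\mathds{1}^{\obs}_{\bar z}$ are bounded, so the uniform integrability required in Proposition~\ref{prop:Ergodic} is automatic. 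Hence, as the trajectory length $N\to\infty$, each entry converges $\PP$-almost surely -- a fortiori in probability -- to the corresponding value of the stationary observation process $\bar f$ of Corollary~\ref{cor:amsObs}; write $\bar F_{C,Q},\bar F_{moC,Q},\bar F^\top_C,\bar F_Q$ for these limits.

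The second stage is purely algebraic. The key structural fact is that $\bar f$ is the external function of an IO-OOM $\bar{\Mcal}'=(\sigma,\{\tau'_{mo}\},\bar\omega')$ whose operators are precisely those produced from $\Mcal$ by Equation~\ref{eq:AugmentOOM}. By Corollaries~\ref{cor:amsObs}--\ref{cor:probObs} this reduces to exhibiting one vector $\bar\omega'\in\RR^d$ with $\bar\PP(X_{1:N}=\bar x_{1:N})=\sigma\,\tau_{x_N}\cdots\tau_{x_1}\,\bar\omega'$ for all words; writing $\tau_\Sigma=\sum_x\tau_x$, asymptotic stationarity yields $\bar\PP(X_{1:N}=\bar x_{1:N})=\lim_j\sigma\,\tau_{x_N}\cdots\tau_{x_1}\,\tau_\Sigma^{\,j}\omega_\epsilon$, and since every covector $\sigma\tau_{x_N}\cdots\tau_{x_1}$ annihilates the orthogonal complement of the ``forward'' subspace $V=\mathrm{span}\{(\sigma\tau_{\bar x})^\top\}$ while these covectors separate $V$, the component of $\tau_\Sigma^{\,j}\omega_\epsilon$ in $V$ converges, and any lift of its limit serves as $\bar\omega'$. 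Granting this, the population Hankel matrices factor as $\bar F_{C,Q}=RW$, $\bar F_{moC,Q}=R\,\tau'_{mo}\,W$, $\bar F^\top_C=R\bar\omega'$, $\bar F_Q=\sigma W$, where $R$ stacks the rows $\sigma\tau'_{\bar c}$ ($\bar c\in C$) and $W$ the columns $\tau'_{\bar q}\bar\omega'$ ($\bar q\in Q$). As $\rank\bar F_{C,Q}\le d$ always while $\ge d$ by Assumption~\ref{assump:modelRank}, it equals $d$, so $R$ has full column rank and $W$ full row rank; then for any $\bar U_d$ with orthonormal columns spanning the column space of $\bar F_{C,Q}$ the matrix $G:=\bar U_d^\top R$ is invertible, $\bar U_d^\top\bar F_{C,Q}=GW$ has full row rank, and a short pseudo-inverse computation turns Steps~\ref{algorithmStep:IOOOMresult}--\ref{algorithmStep:OOMresult} into the output $(\sigma G^{-1},\{G\tau_xG^{-1}\},G\bar\omega')$, which by Lemma~\ref{lemma:similarSS} is the similarity transform under $\rho=G$ of $(\sigma,\{\tau_x\},\bar\omega')$.

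For the third stage, since $\bar F_{C,Q}$ has rank exactly $d$ there is a gap between its $d$-th and $(d+1)$-th singular values, so the top-$d$ left singular subspace of $\hat F_{C,Q}$ converges (almost surely) to the column space of $\bar F_{C,Q}$; consequently $\hat U_d^\top\hat F_{C,Q}$ has full row rank for all large $N$, where the Moore--Penrose pseudo-inverse in Step~\ref{algorithmStep:IOOOMresult} is continuous. The truncated SVD fixes $\hat U_d$ only up to $\hat U_d\mapsto\hat U_dO$ with $O$ orthogonal, and a direct check shows this replaces the learner's output by its similarity transform under $\rho=O^\top$; therefore, setting $\hat G:=\hat U_d^\top R$ (invertible with probability tending to $1$; on the exceptional event put $\tilde{\Mcal}=\hat{\Mcal}$), the quantities $\hat{\sigma'}\hat G$ and $\hat G^{-1}\hat{\tau'}_{mo}\hat G$ are independent of the SVD choice and, by continuity of all the algebra together with the first stage, converge in probability to $\sigma G^{-1}G=\sigma'$ and $G^{-1}(G\tau'_{mo}G^{-1})G=\tau'_{mo}$. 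Taking $\tilde{\Mcal}'$ -- hence $\tilde{\Mcal}$ after the reduction of Step~\ref{algorithmStep:OOMresult} -- to be the similarity transform of $\hat{\Mcal}'$ (resp.\ $\hat{\Mcal}$) under $\rho=\hat G^{-1}$ then produces an equivalent OOM with $\tilde\sigma\xrightarrow[]{p}\sigma$ and $\tilde\tau_x\xrightarrow[]{p}\tau_x$; here $\tilde\omega_\epsilon\to\bar\omega'$, which differs from $\omega_\epsilon$ in general, which is exactly why consistency can only hold up to the OOM initial state.

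I expect the third stage to be the main obstacle: producing one sequence of OOMs that provably converges despite the intrinsic non-uniqueness of the truncated SVD, while simultaneously keeping the Moore--Penrose pseudo-inverse away from its discontinuities. Both difficulties are resolved by exploiting that $\bar F_{C,Q}$ has rank exactly $d$ -- from Assumption~\ref{assump:modelRank}, which supplies both the spectral gap stabilizing the $d$-dimensional range and the full-rankness needed for pseudo-inverse continuity -- and by absorbing the leftover orthogonal freedom of the SVD into the similarity-transform latitude already granted in the statement. The first two stages are comparatively mechanical, resting on Proposition~\ref{prop:estimatorConsistency} and Equation~\ref{eq:AugmentOOM}.
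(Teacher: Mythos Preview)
Your three–stage argument is correct and mirrors the paper's proof: both pass the estimated Hankel blocks to their population limits via Proposition~\ref{prop:estimatorConsistency}, factor the limiting $\bar F_{C,Q}$ as (your $R W$, the paper's $\Pi_C\Phi_Q$), use Assumption~\ref{assump:modelRank} to force all three ranks equal to $d$, show that $U_d^\top R$ is invertible, and conclude by a similarity transform plus the Continuous Mapping Theorem.

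Two technical points on which you and the paper diverge are worth recording. First, the paper obtains the stationary initial vector directly as the eigenvector of $\tau_\Sigma=\sum_x\tau_x$ for eigenvalue $1$, invoking an ergodicity result of Sch\"onhuth for existence and uniqueness; your construction via the limit of the $V$-component of $\tau_\Sigma^{\,j}\omega_\epsilon$ is more hands-on and avoids that citation, at the cost of a slightly longer argument. Second, for the SVD indeterminacy the paper asserts $\plim\hat U_d=AU_d$ with $A$ a \emph{permutation} matrix, which is too restrictive when singular values are repeated; your treatment---allowing the full orthogonal freedom $\hat U_d\mapsto\hat U_dO$ and absorbing $O$ into the similarity transform---is the cleaner way to handle this, and it also makes explicit why one must produce a \emph{data-dependent} $\tilde{\Mcal}$ rather than a fixed one.
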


\begin{proof}
Without loss of generality, it is enough to assume that the training data has a single trajectory $\bar{m} \bar{o}$, i.e., $M = 1$ and $N \to \infty$. According to Assumption \ref{assump:stochasticProcess}, the underlying stochastic process is modeled by a $d$-dimensional underlying OOM $\Mcal = (\sigma, \{ \tau_{x}\}_{x \in \Sigma_{\Ocal}}, \omega_{\epsilon})$. We can augment the underlying OOM $\Mcal$ to an IO-OOM $\Mcal'  = (\sigma', \{\tau'_{m,o}\}, \omega'_{\epsilon})$ using Equation \ref{eq:AugmentOOM}. To prove the theorem, it is enough to show that there exists an IO-OOM $\tilde{\Mcal'} = (\tilde{\sigma'}, \tilde{\tau'}_{m,o}, \tilde{\omega'}_\epsilon)$ which is equivalent to $\hat{\Mcal'} =  (\hat{\sigma'}, \{\hat{\tau'}_{m,o}\}, \hat{\omega'}_{\epsilon})$ of Algorithm \ref{alg:SpectralLearningMissingValues}, such that $\tilde{\Mcal'}$ satisfies 

\[ \tilde{\sigma'}  \xrightarrow[]{p} \sigma', \tilde{\tau'}_{m,o} \xrightarrow[]{p}  \tau'_{m,o}~\forall~m o \in (\{0\} \times \Sigma_{\Ocal}).\]

Define 
\begin{eqnarray}
 \omega'_{\ast} & \coloneqq & \text{The eigenvector~that~corresponds~to~the~eigenvalue~1~of~} \tau'_{0, \Sigma_{\Ocal}}   \\
 \Phi_Q & \coloneqq & \left [ \tau'_{\bar{q}} \omega'_{\ast} \right]_{\bar{q} \in Q} \in \RR^{d \times D_1}\\
\Pi_C & \coloneqq & \left[ (\sigma \tau_{\bar{c}})^\top \right ]_{\bar{c} \in C}^\top \in \RR^{D_2 \times d},
\end{eqnarray}
where $D_1$ and $D_2$ are the cardinalities of the sets of indicative and characteristics sequences. Note that the existence and uniqueness of $\omega'_{\ast}$ is guaranteed by as the stochastic process is assumed to be ergodic in Assumption \ref{assump:stochasticProcess} (\citep[Theorem 5.2]{Schonhuth2009}).

By Assumption \ref{assump:estimator} and proposition \ref{prop:estimatorConsistency},

\begin{eqnarray} \label{eq:HankelMatrices1}
\hat{F}_Q & \xrightarrow[]{p} &  \sigma \Phi_Q \in \RR^{1 \times D_1} \\
\hat{F}_C & \xrightarrow[]{p} & \Pi_{C}   \omega'_{\ast} \in \RR^{1 \times D_2} \\
\hat{F}_{C, Q} & \xrightarrow[]{p} &  \Pi_C \Phi_Q \in \RR^{D_2 \times D_1} \\ 
\hat{F}_{mo C, Q} & \xrightarrow[]{p} &  \Pi_C \tau_{mo} \Phi_Q  \in \RR^{D_2 \times D_1} \label{eq:HankelMatrices4}
\end{eqnarray}

Note that 

\[d \leq \rank ( \plim \hat{F}_{C, Q} ) = \rank (\Pi_C \Phi_Q) \leq \min \left ( \rank (\Pi_c), \rank(\Phi_Q) \right ) \leq d. \]
where the first $\leq$ is by Assumption \ref{assump:modelRank} and the last $\leq$ is by the fact that $\Pi_C \in \RR^{D_2 \times d}, \Phi_Q \in \RR^{d \times D_1}$. This implies 
\[\rank(\Pi_C \Phi_Q) =  \rank (\Pi_C) = \rank(\Phi_Q) = d. \]


As $\rank(\Pi_C \Phi_Q) = d$, $\Pi_C \Phi_Q$ can be decomposed exactly with $d$-truncated SVD. In particular, let $U_d S_d V_d^\top = \Pi_C \Phi_Q$. We claim that $U_d^\top \Pi_C \in \RR^{d \times d}$ is invertible. To show this, consider 

 \begin{eqnarray}
  U_d S_d V_d^\top =  \Pi_C \Phi_Q \\
  \Rightarrow  S_d V_d^\top = U_d^\top \Pi_C \Phi_Q 
\end{eqnarray}

As  $S_d$ is an invertible matrix,
\begin{eqnarray}
& ~& \rank(S_d V_d^\top) = \rank(V_d^\top) = d \\
& \Rightarrow &  \rank(U_d^\top \Pi_C \Phi_Q) = \rank(S_d V_d^\top) = d \\
&  \Rightarrow &  d = \rank(U_d^\top \Pi_C \Phi_Q) \leq \min \left (\rank(U_d^\top \Pi_C), \rank(\Phi_Q) \right ) \\
&  \Rightarrow &   \rank(U_d^\top \Pi_C) \geq d.
\end{eqnarray}

As $U_d^\top \Pi_C \in \RR^{d \times d}$, this implies $\rank \left ( U_d^\top \Pi_C \right ) = d$, and therefore $U_d^\top \Pi_C$ is invertible. Now we argue that $\plim \left ( \hat{U}_d^\top \Pi_C \right )$ is invertible, too. Indeed, we can write $\plim \hat{U}_d = A U_d$, where $A$ is some permutation matrix\footnote{Note that a SVD is only unique up to permutations of singular vectors and singular values. See, e.g., \cite[pg. 99]{Dasgupta2006}.}. Since such a permutation matrix $A$ is invertible, it is clear that $ \rank \left ( \plim \hat{U}_d^\top \Pi_C \right) = \rank (A U_d^\top \Pi_C) = \rank (U_d^\top \Pi_C)  = d$, and therefore $\plim \left ( \hat{U}_d^\top \Pi_C \right )$ is invertible. 

Define $\rho \coloneqq \plim \hat{U}_d^\top \Pi_C =  A U_d^\top \Pi_C $ and let
\begin{eqnarray}
\tilde{\sigma'} & \coloneqq & \hat{\sigma'} \rho  \\
 \tilde{\tau'}_{m,o} & \coloneqq & \rho^{-1}~\hat{\tau'}_{mo}~\rho   \\
 \tilde{\omega'}_{\epsilon} & \coloneqq & \rho^{-1}~\omega'_{\epsilon}. 
\end{eqnarray}

That is, $\tilde{\Mcal'}$ and $\hat{\Mcal'}$ are the same up to a similarity transformation induced by $\rho$. We now show that 

\[ \tilde{\sigma'}  \xrightarrow[]{p} \sigma', \tilde{\tau'}_{mo} \xrightarrow[]{p}  \tau'_{mo}~\forall mo.\]

Observe that

\begin{equation}  \label{eq:sigmaCorrectness}
\begin{split}
\tilde{\sigma'} & = {\color{Magenta} \hat{\sigma'}} {\color{LimeGreen} \rho} \\
 & = {\color{Magenta}{\hat{F}_Q (\hat{U}_d^\top \hat{F}_{C,Q})^{-1}}} {\color{LimeGreen} (A U_d^\top \Pi_C)} \\ 
 & \xrightarrow[]{p} (\sigma \Phi_Q) (A U_d^\top \Pi_C \Phi_Q)^{-1} (A U_d^\top \Pi_C) \\
     & = (\sigma \Phi_Q) \Phi_Q^{\dagger} \\
    & =  \sigma' \\    
\end{split}
\end{equation}

and that

\begin{equation} \label{eq:tauCorrectness}
\begin{split}
\tilde{\tau'}_{mo} & = {\color{NavyBlue} \rho^{-1}}~{\color{BrickRed} \hat{\tau'}_{mo}}~{\color{NavyBlue} \rho} \\
 & = {\color{NavyBlue} (A U_d^\top \Pi_C)^{-1}}~{\color{BrickRed} \hat{\tau'}_{mo}}~{\color{NavyBlue}(A U_d^\top \Pi_C)} \\
  & = {\color{NavyBlue}(A U_d^\top \Pi_C)^{-1}}~{\color{BrickRed}\hat{U}_d^\top \hat{F}_{mo C, Q} (\hat{U}_d^\top \hat{F}_{C, Q})^{-1}}~{\color{NavyBlue}(A U_d^\top \Pi_C)} \\
  & \xrightarrow[]{p} (A U_d^\top \Pi_C)^{-1}~A U_d^\top \Pi_C \tau_{mo} \Phi_Q (A U_d^\top \Pi_C \Phi_Q)^{-1}~(A U_d^\top \Pi_C) \\
  & = \Pi_C^{\dagger}~\Pi_C \tau_{mo} \Phi_Q (\Pi_C \Phi_Q)^{\dagger}~\Pi_C \\
    & = \tau'_{mo}
    \end{split}
\end{equation}

and we are done. 

We point out that in the above derivation we have implicitly assumed that $\hat{U}_d^\top \hat{F}_{C,Q} \xrightarrow[]{p} A U_d^\top  \Pi_C \Phi_Q$ implies $(\hat{U}_d^\top \hat{F}_{C,Q})^{-1} \xrightarrow[]{p} (A U_d^\top  \Pi_C \Phi_Q)^{-1}$. This holds thanks to the Continuous Mapping Theorem \citep{Mann1943} because the matrix inverse is a continuous mapping for the full rank square matrix $A U_d^\top  \Pi_C \Phi_Q$.

\end{proof}

\section{Empirical Results} \label{sec:empiricalResults}

In this section, we empirically evaluate our proposed method and compare it with several baseline methods on synthetic data and real-world data.

\subsection{Evaluation Metric}

We empirically evaluate our proposed method with synthetic experiments and real-world experiments. We compare it against three baseline methods. The first baseline method is the EM-based Baum-Welch algorithm that learns HMMs from data containing missing values \citep{Yeh2012}. We will refer to this method as \emph{missing value HMM}. Another reasonable approach is to first truncate the training data that contain missing values into short trajectories that are free from missing values, and learn the model parameters based on the ensemble of short trajectories, leading to the resulted models \emph{short trajectory HMM} and \emph{short trajectory OOM}.

The following settings were used in the experiments. To train short trajectory HMMs we use the public-domain implementation of discrete observation Hidden Markov Model provided by Zoubin Ghahramani\footnote{\texttt{http://mlg.eng.cam.ac.uk/zoubin/software/dhmm.tar.gz}}. The maximum number of cycles of Baum-Welch was set to be 100, and the termination tolerance was set to be 0.0001. For the missing value HMM, we modify Ghahramani's implementation according to the recommendation of \citep{Yeh2012} while keeping algorithm parameters the same. To train short trajectory OOMs and missing value OOMs, we use indicative and characteristics sequences of length 3 in all experiments.

For synthetic experiments, given a learned model $\mathcal{\hat{M}}$, the true model $\mathcal{M}$, and the testing dataset $D$, we evaluate the model learning results using the metric \emph{log of average one-step prediction error} (LAOSPE):
\begin{align*}
 \text{LAOSPE} (\hat{\mathcal{M}}; \mathcal{M}, D) =  \log_2 \left [ \frac{1}{|D|} \sum_{\bar{x} \in D} \frac{1}{ |\bar{x}|} \sum_{t = 1}^{|\bar{x}|} \frac{1}{ |\Sigma_{\Ocal}|} \sum_{ o_t \in \Sigma_{\Ocal}}  \left( \mathbb{P}_{\hat{\mathcal{M}}} (o_t \mid \bar{x}_{1:t-1} ) \right. \right. \\  
 \left. \left. - \mathbb{P}_{\Mcal} \left ( o_t \mid \bar{x}_{1:t-1} \right ) \right )^2 \right ].
 \end{align*}

For  experiments with real-world data, the metric LAOSPE cannot be used since the true model $\Mcal$ is unknown. For this reason, we use the \emph{average negative log likelihood} (ANLL):
\begin{equation*}
 \text{ANLL} (\hat{\mathcal{M}}; D) = - \frac{1}{|D|} \sum_{\bar{x} \in D} \frac{1}{ |\bar{x}|} \log_2 (\mathbb{P}_{\hat{\mathcal{M}}} (\bar{x})).
  \end{equation*}

In practice, it might not be feasible to evaluate ANLL for all $\hat{\mathcal{M}}$ because the obtained OOM models might assign negative "probabilities" for some sequences \citep{Jaeger2006}.  For this reason, we first normalize the learned OOMs as specified in \citep[Appendix J]{Jaeger2006}.

\subsection{Experiment with synthetic HMM data}

\begin{figure}
\begin{centering}
\includegraphics[width=\columnwidth]{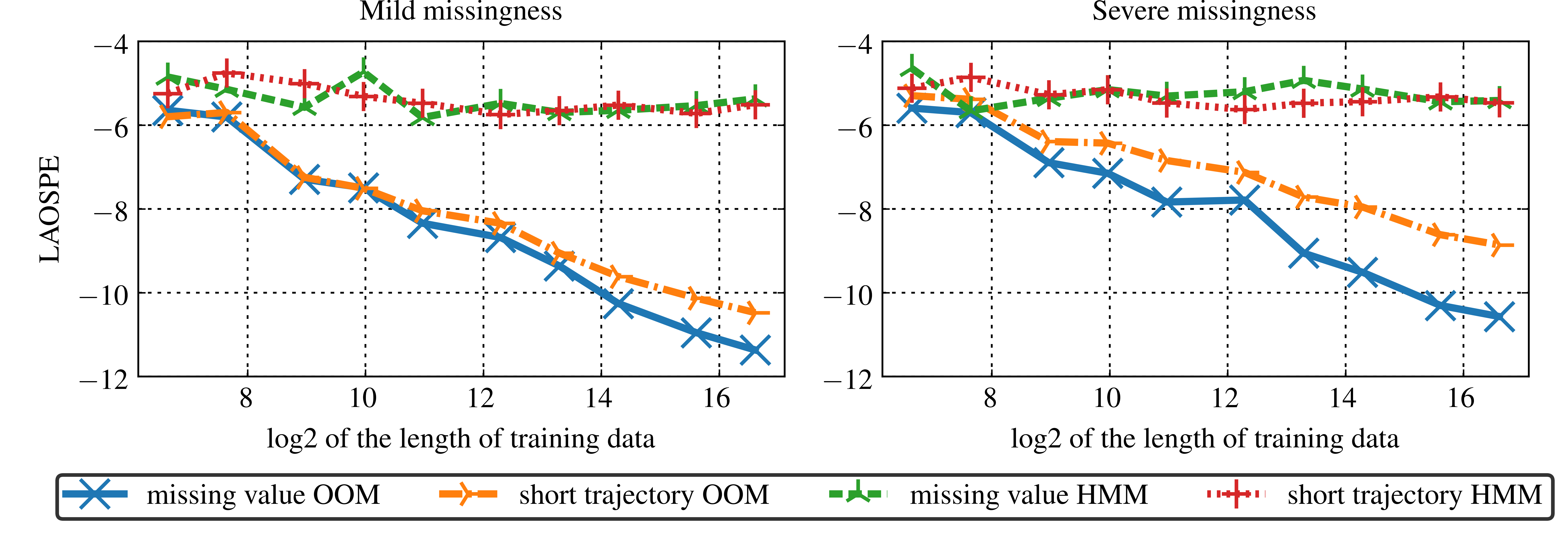}
\par\end{centering}
\caption{Testing results for models estimated based on mildly or severely corrupted ring-topology HMM data \label{fig:ring}. The LAOSPE score is calculated based on the training trajectory of length 100, 200, 500, 1000, 2000, 5000, 10000, 20000, 50000, and 100000 sequentially (note that the $x$-axis is $\log_2$ scaled). Upon using the training trajectory of length 100000, 8523 values are missing in the mild missingness scenario and 21244 values are missing for the severe missingness scenario.}
\end{figure}

We test our method on a synthetic HMM dataset similar to the ones used in \citep{Jiang2016} and \citep{Downey2017}. The data is randomly generated by a ring-topology HMMs with 20 latent states and 20 observations, where each latent state has at most two possible observations chosen randomly. The transition matrix follows a ring-topology, where each state can only transit to its two neighbors or to itself. All non-zero entries of the transition matrix, the emission matrix, and the initial state distribution are picked uniformly randomly from [0, 1) and then normalized. A training trajectory of length $10^6$ was sampled with the ring-topology HMM. To introduce artificial missing values in the training data, we consider two kinds of missingness: (i) ``mild missingness,'' for which we randomly choose 5 observations such that with probability 0.3 the immediate observation after any one of these five observations will be turned into a missing value symbol; and (ii) ``severe missingness,'' for which we randomly choose 10 observations such that with probability 0.5 the immediate observation after any one of these ten observations will be turned into a missing value symbol. It is clear that both of the severe and mild missingness are AMSAR. Moreover, in neither case, the missingness process is independent of the observation process. HMM and OOM models are then estimated based on the resulted training trajectory containing either mild or severe missing values. We additionally sample 10000 trajectories each with length 100 using the ring-topology HMM as the testing dataset, which does not contain missing values. Figure \ref{fig:ring} shows the testing results for all learned models. Apparently, the proposed missing value OOM method outperforms the three baseline methods. Compared with the baseline methods, missing value OOMs are particularly advantageous when the missingness is severe. The experiments also empirically demonstrate the asymptotic consistency of the two spectral methods: The spectral-method-yielded OOMs continuously improve the LAOSPE scores as the training trajectory becomes increasingly longer. The proposed method of missing value OOM has a faster empirical convergence rate compared with that of short trajectory OOM.

\subsection{Experiment with real-world geyser data}

We showcase the effectiveness of the proposed method using the ranger log data of the Old Faithful geyser, a cone geyser located in Yellowstone National Park, the United States, recorded by \citet{Stephens2010, Stephens2012}. As the ranger logs were usually kept during the working day and in the non-winter months, missing values are commonplace in the dataset. In particular, records at the weeks in November and March are mostly missing as the park personnel was on furlough \citep{Hartigan2013}. The goal of our experiment is to model the number of daily eruptions of Old Faithful geysers. For that purpose, we use the trajectory of daily eruptions of Old Faithful geysers from the year 2000-2007 as the training data, which contains the eruption records for 2922 days. Among 2922 days, the records for 666 days are missing. We ensemble the short trajectories that are free from missing values in the year 2008 - 2010 as the testing data. The experiment results are shown in Figure \ref{fig:geyser}. We see that the performance of EM methods are jittering, possible due to the fact that the training dataset size is small. In contrary, the spectral methods for short trajectory OOMs and missing values OOMs are more stable. In particular, the proposed missing value OOM method favorably yields lower ANLL compared to each of the alternatives. 

\begin{figure}
\begin{centering}
\includegraphics{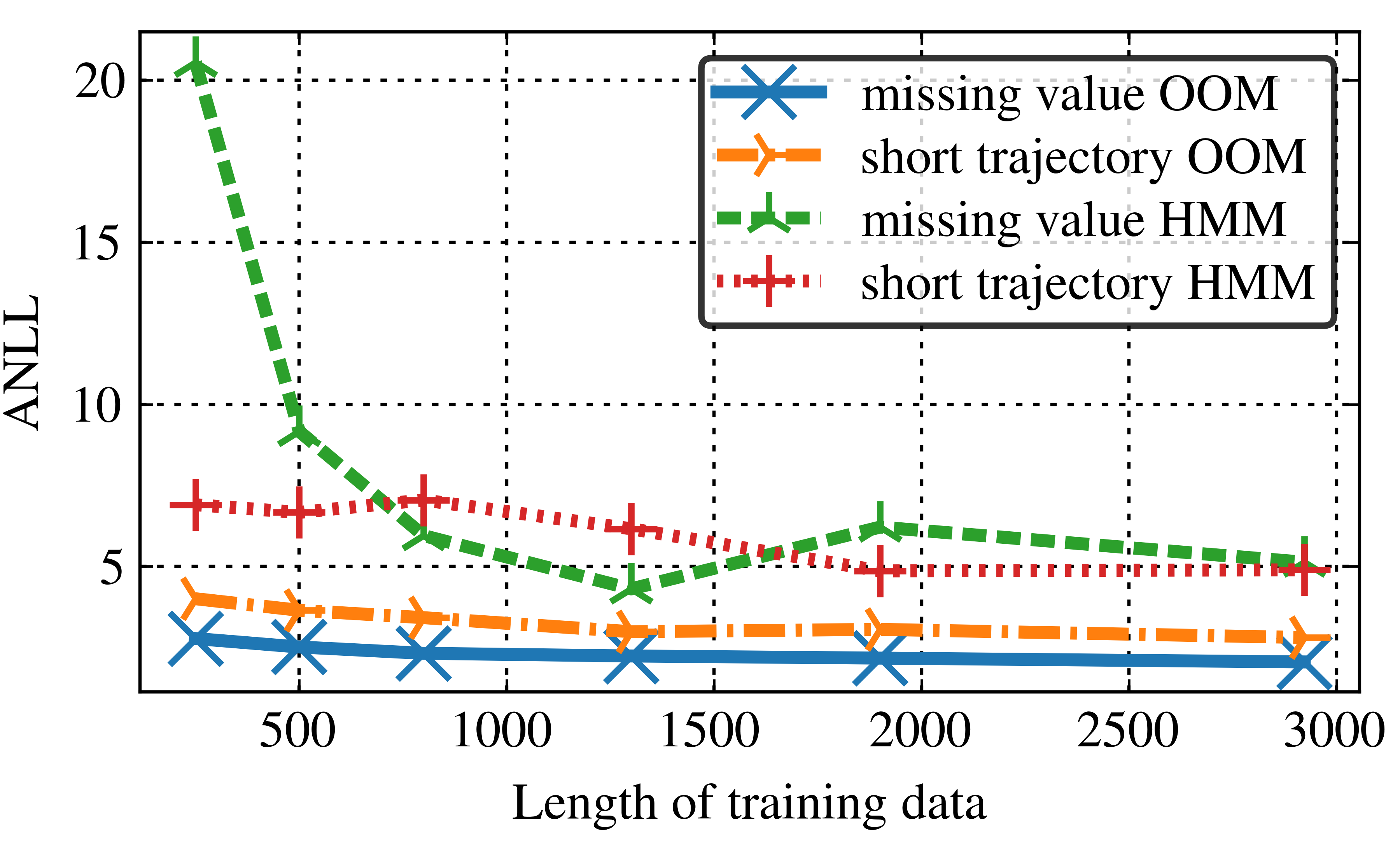}
\par\end{centering}
\caption{Testing results for models estimated from ranger log data of the Old Faithful geyser 2000-2007. The ANLL score is calculated based on the training trajectory of length 250, 500, 800, 1300, 1900, and 2922. Upon using the training trajectory of length 2922, 666 values are missing. \label{fig:geyser}}
\end{figure}

\section{Conclusion}
We presented a novel spectral method for learning OOMs from training data containing missing values originally proposed in \citep{Thon2017}. We analyzed sufficient conditions for achieving asymptotic consistency of the proposed algorithm and showed that such a set of conditions is practically relevant. By simulation, we demonstrated that the proposed method compares very favorably against previously used EM and spectral algorithms in a synthetic and a real-world dataset. 

There are several possibilities for future work. For one, it is tempting to investigate other spectral methods such as tensor decomposition methods to handle time series data containing missing values. The results reported in Section \ref{subsection:types}-\ref{subsection:estimator} for general stochastic processes can readily be reused for such purposes. Additionally, the current paper only considers real-time sequential systems. In many other scenarios such as gene modeling, however, there is no time structure in data, and therefore conditional independence assumptions such as AMSAR can no longer be safely made -- a problem left for future work.
\section*{Acknowledgement}
The author is grateful to Prof. Dr. Herbert Jaeger for his valuable suggestions throughout this project and for his careful proofreading of the final manuscript. The author thanks Dr. Michael Thon for helpful discussions. The author is supported by a Jacobs University Bremen Graduate Scholarship.

\newpage

\bibliography{/Users/liutianlin/Desktop/Academics/MINDS/OOM/oom_progress.bib}


\end{document}